\newtheorem{assumption}{Assumption}
\newtheorem{lemma}{Lemma}
\newtheorem{theorem}{Theorem}
\newtheorem{corollary}{Corollary}
\title{APEX: Learning Adaptive Priorities for Multi-Objective Alignment in Vision-Language Generation}
\author{
  Dongliang Chen\thanks{\hspace{0.5mm}Equal contribution.}$^{,1}$ \quad
  Xinlin Zhuang$^{*,1}$ \quad
  Junjie Xu$^{*,1}$ \quad
  Luojian Xie$^{1}$ \quad
  Zehui Wang$^{1}$ \\
  \textbf{Jiaxi Zhuang$^{1}$ \quad
  Haolin Yang$^{2}$ \quad
  Liang Dou$^{1}$ \quad
  Xiao He$^{1}$ \quad
  Xingjiao Wu\thanks{\hspace{0.5mm}Corresponding authors.}$^{,1}$ \quad
  Ying Qian$^{\dagger,1}$} \\
  \\
  $^{1}$East China Normal University \quad
  $^{2}$MBZUAI \\
}
\begin{document}

\maketitle

\begin{abstract}

Multi-objective alignment for text-to-image generation is commonly implemented via static linear scalarization, but \textit{fixed} weights often fail under heterogeneous rewards, leading to optimization imbalance where models overfit high-variance, high-responsiveness objectives (e.g., OCR) while under-optimizing perceptual goals. 
We identify two mechanistic causes: \textbf{variance hijacking}, where reward dispersion induces implicit reweighting that dominates the normalized training signal, and \textbf{gradient conflicts}, where competing objectives produce opposing update directions and trigger seesaw-like oscillations. 
We propose \textbf{APEX} (\textbf{A}daptive \textbf{P}riority-based \textbf{E}fficient \textbf{X}-objective Alignment), which stabilizes heterogeneous rewards with \textbf{Dual-Stage Adaptive Normalization} and dynamically schedules objectives via \textbf{\bm{$\mathcal{P}^3$} Adaptive Priorities} that combine learning potential, conflict penalty, and progress need. 
On Stable Diffusion 3.5, APEX achieves improved Pareto trade-offs across four heterogeneous objectives, with balanced gains of \textbf{+1.31 PickScore}, \textbf{+0.35 DeQA}, and \textbf{+0.53 Aesthetics} while maintaining competitive OCR accuracy, mitigating the instability of multi-objective alignment.

\end{abstract}

\section{Introduction}

Vision-language generation \citep{bie2023renaissancesurveyaitexttoimage} has advanced rapidly in recent years, enabling text-to-image (T2I) models based on diffusion and flow matching to synthesize high-fidelity images from natural language prompts \citep{ho2020denoisingdiffusionprobabilisticmodels,Rombach_2022_CVPR,lipman2023flowmatchinggenerativemodeling,bie2023renaissancesurveyaitexttoimage}. 
As these systems move from demos to real use cases, a single notion of \textbf{quality} is no longer sufficient. 
Practical alignment must simultaneously satisfy \textbf{heterogeneous objectives}, ranging from discrete structural constraints (e.g., rendering legible text) to perceptual preferences (aesthetics, realism, artifact suppression) \citep{NEURIPS2023_33646ef0,10655849}. 
However, these objectives are often competing: improving local sharpness to boost text readability can degrade global lighting coherence or introduce artifacts, yielding a brittle ``one-metric-at-a-time'' behavior (as shown in Figure~\ref{fig:teaser}). 
Therefore, achieving \textbf{multi-objective alignment} in T2I generation is  a critical challenge.

\begin{figure}[t]  
  \centering
  \includegraphics[width=\columnwidth]{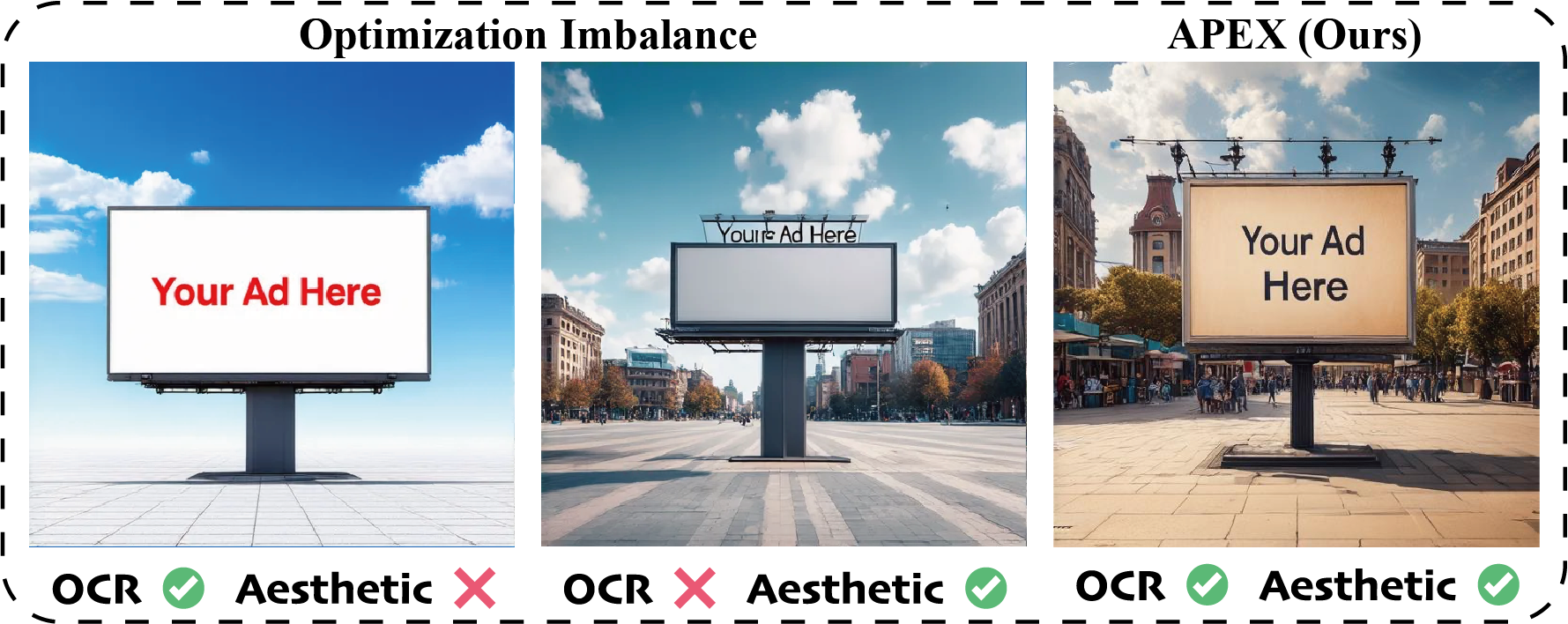} 
  \caption{Images generated by Stable Diffusion 3.5 under different competing objective settings. For the prompt \textit{A city square with a billboard... filled with 'Your Ad Here'}, optimizing for text clarity (\textbf{left}) or visual quality (\textbf{middle}) leads to imbalance. APEX achieves effective multi-objective alignment (\textbf{right}).}
  \label{fig:teaser}
\end{figure}

To optimize black-box, non-differentiable objectives (e.g., OCR metrics and preference models), reinforcement learning (RL) has become a standard paradigm for post-training alignment \citep{ziegler2020finetuninglanguagemodelshuman,black2024trainingdiffusionmodelsreinforcement} 
In the multi-objective setting, existing approaches typically rely on \textbf{Static Linear Scalarization}, which combines disparate reward signals using fixed weights that remain constant throughout training \citep{clark2024directlyfinetuningdiffusionmodels}.
Despite its simplicity, we find that static scalarization fails systematically for heterogeneous rewards, producing severe optimization imbalance that cannot be resolved by merely \textit{tuning weights}. 
Our analysis identifies two mechanistic failure modes that explain this behavior.
\textbf{(1) Variance Hijacking.} 
Even with equal preset weights (Figure~\ref{fig:main}, top-right), objectives with larger dispersion or stronger responsiveness can implicitly dominate the normalized training signal, effectively hijacking the gradient budget. 
In practice, high-variance, discrete constraints such as OCR can saturate early while continuing to monopolize updates, starving low-variance perceptual objectives and preventing further improvement elsewhere. 
This makes the intended scalarization weights unreliable as a control mechanism.
\textbf{(2) Gradient Conflicts.} 
Because objectives share parameters, their policy gradients can point in opposing directions. 
These conflicts appear intermittently and can be severe, causing oscillations, forgetting, and ``seesaw'' trade-offs where improvements in one objective coincide with regressions in others. 
Together, variance hijacking and gradient conflicts explain why static scalarization often yields unstable training and poor Pareto trade-offs in T2I alignment.

These observations suggest that effective multi-objective alignment requires \textit{state-dependent} scheduling, adjusting priorities based on training dynamics rather than fixed weights. 
To this end, we propose \textbf{APEX} (\textbf{A}daptive \textbf{P}riority-based \textbf{E}fficient \textbf{X}-objective Alignment), a simple yet effective framework that addresses both failure modes without discarding much generated samples (unlike sample-filtering approaches such as Parrot \citep{lee2024parrot}). 
APEX decouples two roles that are conflated in static scalarization: (i) constructing a stable scalar learning signal under heterogeneous rewards, and (ii) deciding which objectives to emphasize at each stage of training. 
Specifically, APEX introduces \textbf{Dual-Stage Adaptive Normalization (DSAN)} to neutralize variance hijacking by standardizing rewards per objective and re-normalizing after aggregation, keeping the effective update scale stable even as priorities change. 
On top of this calibrated signal space, the \textbf{$\bm{\mathcal{P}^3}$ mechanism} computes adaptive priorities from learning potential, inter-objective conflicts, and remaining headroom to empirical upper bounds, dynamically steering optimization toward bottlenecks while damping destructive interference. 

In summary, our contributions are as follows: 
\textbf{First}, we provide a mechanistic analysis of why static scalarization fails in multi-objective T2I RL, identifying \textbf{variance hijacking} and \textbf{gradient conflicts} as causes of optimization imbalance.
\textbf{Second}, we propose \textbf{APEX}, a decoupled framework combining DSAN (stable normalization under heterogeneous rewards) with $\bm{\mathcal{P}^3}$ (dynamic priority scheduling from training-state signals).
\textbf{Third}, experiments on Stable Diffusion 3.5 demonstrate improved Pareto trade-offs across OCR, Aesthetic, PickScore, and DeQA, achieving substantially higher hypervolume than static scalarization while maintaining full sample efficiency.

\section{Related Work}

\noindent \textbf{RL for T2I Alignment.} 
As T2I architectures evolve from Latent Diffusion Models (LDMs) \citep{Rombach_2022_CVPR} to Flow Matching frameworks \citep{lipman2023flowmatchinggenerativemodeling}, the research focus has shifted from high-fidelity image synthesis to precise alignment with multi-faceted human intents. 
Reinforcement Learning (RL) has emerged as the core paradigm for post-training alignment: 
methods like DPOK \citep{NEURIPS2023_fc65fab8} and DDPO \citep{black2024trainingdiffusionmodelsreinforcement} stabilize training via KL regularization and policy optimization. 
Human preference benchmarks \citep{NEURIPS2023_33646ef0,kirstain2023pickapicopendatasetuser} provide unified reward signals; DRaFT \citep{clark2024directlyfinetuningdiffusionmodels} validated static linear scalarization for multiple rewards. Notably, Flow-GRPO \citep{liu2025flowgrpotrainingflowmatching} successfully extended Group Relative Policy Optimization to flow matching models, improving single-step efficiency. However, these methods remain primarily single-objective driven. In contrast, our \textbf{APEX} mechanism introduces dynamic priority scheduling, extending optimization to complex heterogeneous multi-objective scenarios.

\vspace{0.5em}
\noindent \textbf{Multi-Objective Optimization.} 
Finding a Pareto-optimal balance between conflicting objectives remains a central challenge. Parrot \citep{lee2024parrot} approximates the Pareto front through non-dominated sorting (NSGA-II \citealt{deb2002fast}), but relies on rejection sampling with significant sample wastage. T2I-R1 \citep{jiang2025t2ir1reinforcingimagegeneration} integrates Chain-of-Thought for semantic planning, yet employs fixed ensemble averaging without dynamic priority allocation. In the LLM domain, \citet{lu2025learningoptimizemultiobjectivealignment} addresses the failure of fixed-weight linear scalarization by introducing dynamic reward weighting. Concurrent work \citep{lyu2025multigrpomultigroupadvantageestimation} explores independent reward normalization. Inspired by these, APEX addresses unique T2I challenges: cross-modal heterogeneous rewards (e.g., discrete OCR versus smooth aesthetics) cause severe ``variance hijacking,'' where high-variance signals implicitly dominate optimization. APEX mitigates this through \textbf{DSAN} for signal calibration and a $\bm{\mathcal{P}^3}$ priority scheduler that fuses gradient geometry with \textbf{Utopia Point} \citep{Marler2004} metrics to guide toward the Pareto front efficiently.

\begin{figure*}[t]
  \centering
  \includegraphics[width=\textwidth]{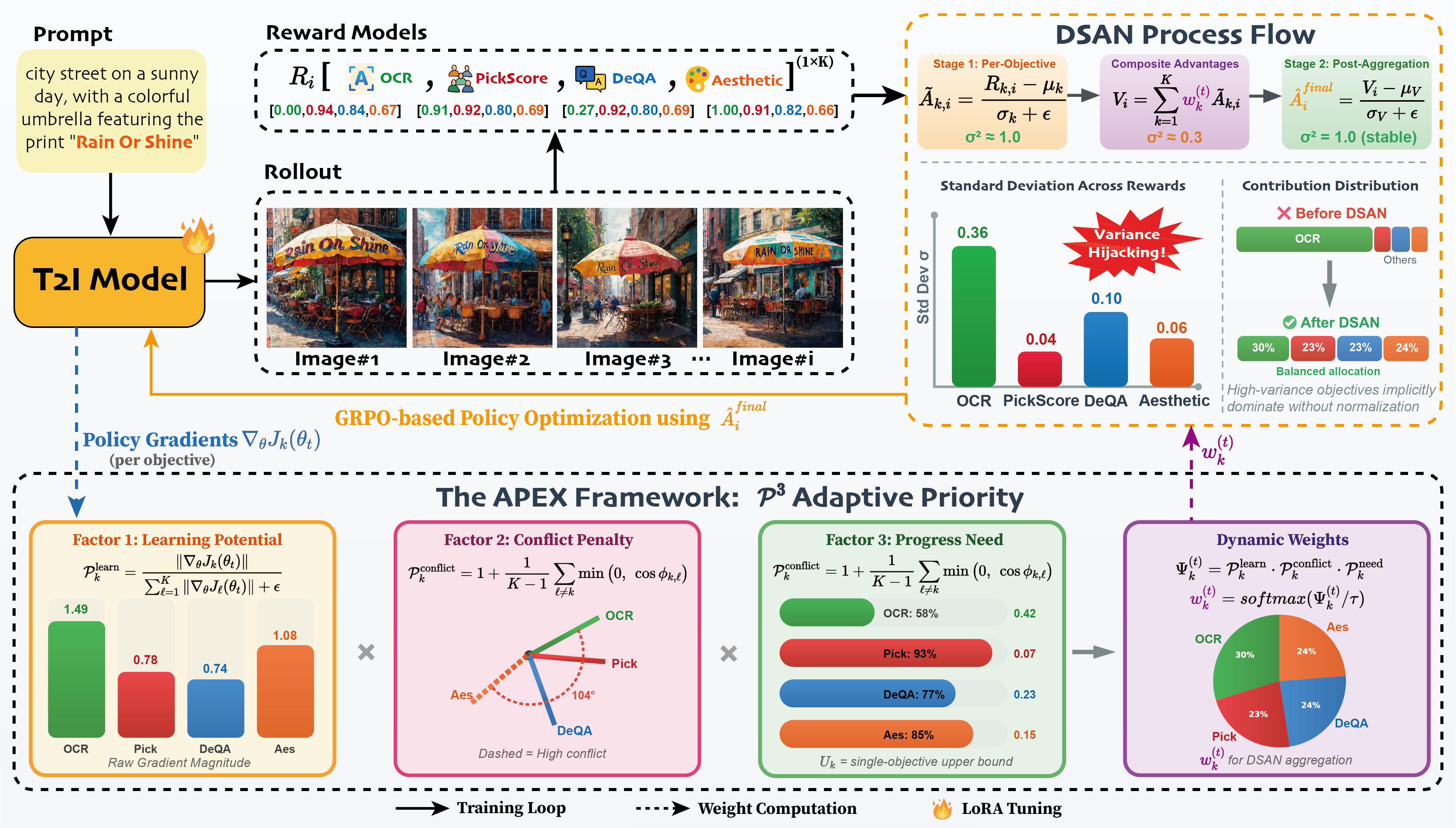}
  \caption{\textbf{Overview of the APEX framework.} \textit{Top:} The training loop generates rollouts from prompts, evaluates them with multiple reward models, and performs GRPO-based policy optimization. DSAN eliminates variance hijacking via dual-stage normalization, producing balanced gradient contributions. \textit{Bottom:} The $\mathcal{P}^3$ mechanism analyzes per-objective policy gradients $\nabla_\theta J_k(\theta_t)$ (used only for weight computation, not 
parameter updates) to compute dynamic weights $w_k^{(t)}$ by fusing learning potential, conflict penalty, and progress need, which are subsequently fed back to DSAN for advantage aggregation.}
  \label{fig:main}
  \vspace{-3mm} 
\end{figure*}

\section{Method}

\subsection{Preliminaries}
\label{subsec:preliminaries}

We consider a flow-matching-based text-to-image model parameterized by $\theta$. 
Given a prompt $c \sim \mathcal{D}$, the model generates an image $x_0 \in \mathbb{R}^{H\times W\times 3}$ by denoising from an initial latent $x_1$ along a continuous time variable $j\in[0,1]$ (with $j=1$ being noise and $j=0$ being data). 
In practice, we discretize time into $T$ steps $\{j_t\}_{t=0}^{T}$ with $j_0=0$ and $j_T=1$, and denote the discrete trajectory as $\tau = \{x_{j_T}, x_{j_{T-1}}, \dots, x_{j_0}\}$.

Standard flow matching inference follows a deterministic reverse-time ODE: $d x_j = v_\theta(x_j, j)\, dj$, which is unsuitable for policy-gradient-style alignment due to the lack of stochastic exploration and an explicit tractable transition density.
To enable stochastic sampling while preserving the model's marginal distribution, 
we adopt the ODE-to-SDE conversion from recent work on stochastic flow matching \citep{albergo2023stochastic, liu2025flowgrpotrainingflowmatching}:
\begin{equation}
\label{eq:fm_sde}
\begin{split}
d x_j &= v_\theta(x_j, j)\, dj \\
&\quad + \frac{\sigma_j^2}{2j} \left(x_j + (1-j)\, v_\theta(x_j, j)\right) dj
+ \sigma_j\, d w,
\end{split}
\end{equation}
where $w$ is a standard Wiener process and $\sigma_j$ controls stochasticity. 
This SDE is constructed so that its marginal distribution matches the original ODE model under suitable conditions.
Details and derivations are provided in Appendix~\ref{app:sde_derivation}. 

We discretize Eq.~\eqref{eq:fm_sde} using a numerical SDE solver (e.g., Euler--Maruyama), yielding a stochastic Markov chain with Gaussian transitions:
\begin{equation}
\label{eq:gauss_transition}
\pi_\theta(x_{j_{t-1}} \mid x_{j_t}, c) = \mathcal{N}\!\left(x_{j_{t-1}};\ \mu_\theta(x_{j_t}, j_t, c),\ \Sigma_{j_t}\right).
\end{equation}
The closed-form Gaussian density enables tractable computation of (i) per-step log-probabilities $\log \pi_\theta(x_{j_{t-1}}\mid x_{j_t}, c)$, (ii) likelihood ratios used by PPO/GRPO, and (iii) analytical KL divergence to a reference policy $\pi_{\text{ref}}$ (Appendix~\ref{app:kl_implementation}).

\paragraph{GRPO for flow matching policies.}
To align $\pi_\theta$ without training an additional critic, we adopt 
Group Relative Policy Optimization (GRPO)~\citep{shao2024deepseekmathpushinglimitsmathematical}. 
For each prompt $c$, we sample a group of $G$ independent trajectories 
under the stochastic policy induced by Eq.~\eqref{eq:gauss_transition}, 
obtaining images $\{x_0^{(i)}\}_{i=1}^G$.
Let $\theta_{\text{old}}$ be the behavior policy. The per-step likelihood 
ratio is

\begin{equation}
\label{eq:ratio}
r_t^{(i)}(\theta)=
\frac{\pi_\theta(x_{j_{t-1}}^{(i)}\mid x_{j_t}^{(i)},c)}
{\pi_{\theta_{\text{old}}}(x_{j_{t-1}}^{(i)}\mid x_{j_t}^{(i)},c)}.
\end{equation}
GRPO maximizes a clipped objective, 
regularized by KL divergence to the reference policy 
(see Appendix~\ref{app:grpo_details} for the complete formulation).

\subsection{Problem Formulation}
\label{sec:problem}

Given a prompt $c\sim\mathcal D$, the policy $\pi_\theta$ samples an image $x_0\sim \pi_\theta(\cdot\mid c)$. 
We are provided with $K$ heterogeneous reward functions $\{R_k(x_0,c)\}_{k=1}^K$ (e.g., text-image faithfulness, aesthetics, OCR quality), and we aim to improve all of them during fine-tuning.
Define the expected performance of each objective
\begin{equation}
\label{eq:Jk_def}
J_k(\theta)\triangleq \mathbb{E}_{c\sim\mathcal D,\ x_0\sim\pi_\theta(\cdot\mid c)}\!\left[R_k(x_0,c)\right],
\end{equation}
where $k\in\{1,\dots,K\}$.
The alignment goal is naturally multi-objective, i.e., to improve the vector $\big[J_1(\theta),\dots,J_K(\theta)\big]^\top$. 
Since our optimizer operates on a scalar loss, a common practice is to optimize a scalarization of objectives:
\begin{equation}
\label{eq:scalar_J_def}
J(\theta;\mathbf w)\triangleq \sum_{k=1}^K w_k\, J_k(\theta),
\end{equation}
where $\mathbf w\in\Delta^{K-1}=\{\mathbf w:\ w_k\ge 0,\ \sum_{k=1}^K w_k=1\}$.
Equivalently, at the sample level this corresponds to a scalarized reward
\begin{equation}
\label{eq:scalar_reward_def}
R(x_0,c;\mathbf w)=\sum_{k=1}^K w_k\, R_k(x_0,c).
\end{equation}
All rewards are defined on the final image $x_0$ (terminal reward), while the policy is the stochastic reverse-time trajectory induced by Eq.~\eqref{eq:gauss_transition}.

\paragraph{Why dynamic weights are needed?}
Most prior pipelines use a fixed $\mathbf w$ throughout training, such as DRaFT \citep{clark2024directlyfinetuningdiffusionmodels}. 
In practice, a fixed $\mathbf w$ does not guarantee balanced progress across objectives because the \emph{optimization state} changes over time.
Concretely, (i) some objectives may currently provide stronger learning signals than others, (ii) objectives can interact through shared parameters and may help or hinder each other, and (iii) some objectives may plateau earlier and benefit less from continued emphasis.
These effects evolve during fine-tuning, motivating a state-dependent weighting rule $\mathbf w^{(t)}$ that adapts to the current training dynamics, where $t$ denotes the training step index.

Under GRPO, a direct instantiation plugs the scalarized reward in Eq.~\eqref{eq:scalar_reward_def} into group-relative advantage normalization:
\begin{equation}
\label{eq:naive_mo_grpo}
\begin{split}
    R^{(i)} &= \sum_{k=1}^K w_k^{(t)}\,R_k(x_0^{(i)},c), \\
    \hat A^{(i)} &= \frac{R^{(i)}-\text{mean}(\{R^{(m)}\}_{m=1}^G)}
    {\text{std}(\{R^{(m)}\}_{m=1}^G)+\epsilon}.
\end{split}
\end{equation}
While simple, this \textit{weight-then-normalize} baseline can be unstable under heterogeneous rewards and time-varying weights: objectives with larger dispersion can disproportionately shape the normalized advantage, and changing $\mathbf w^{(t)}$ can shift the advantage statistics and effectively rescale the update from one iteration to the next.
This motivates (i) an advantage construction that is robust to reward heterogeneity, and (ii) a principled mechanism for updating $\mathbf w^{(t)}$ from the observed optimization state.

\subsection{The APEX Method}
\label{sec:apex}

We propose \textbf{APEX}, a dynamic multi-objective alignment method built on GRPO.
APEX is designed as a two-level solution that jointly stabilizes the \textit{training signal} and adapts the \textit{objective priorities}:
\textit{(i) Dual-Stage Adaptive Normalization (DSAN)} constructs a scalar advantage whose scale is stable under heterogeneous rewards and changing weights, and
\textit{(ii) the $\bm{\mathcal{P}^3}$ mechanism} updates weights from the current learning signal strength, inter-objective interaction, and remaining improvement room.
Together, DSAN and $\bm{\mathcal{P}^3}$ form a closed loop: $\bm{\mathcal{P}^3}$ adjusts what to emphasize, while DSAN ensures the resulting scalar advantage remains comparable across iterations and does not inadvertently change the effective update strength. 
An overview of APEX is provided in Figure \ref{fig:main}.

\subsubsection{Dual-Stage Adaptive Normalization}
\label{sec:dsan}

\paragraph{Stage 1. Per-objective group standardization.}
For each objective $k$, we compute a group-relative standardized advantage:
\begin{equation}
\label{eq:dsan_stage1}
\tilde{A}_{k}^{(i)}=
\frac{R_k(x_0^{(i)},c)-\text{mean}(\{R_k(x_0^{(m)},c)\}_{m=1}^G)}
{\text{std}(\{R_k(x_0^{(m)},c)\}_{m=1}^G)+\epsilon},
\end{equation}
where $\epsilon$ is a small constant for numerical stability.
This aligns objectives onto a comparable scale so that no single reward dominates the update merely due to scale or dispersion.

\paragraph{Stage 2. Post-aggregation normalization.}
Given weights $\mathbf w^{(t)}$, we aggregate standardized advantages and normalize again:
\begin{equation}
\label{eq:dsan_stage2}
\begin{split}
    V^{(i)} &= \sum_{k=1}^K w_k^{(t)}\,\tilde{A}_{k}^{(i)}, \\
    \hat{A}_{\text{final}}^{(i)} &= \frac{V^{(i)}-\text{mean}(\{V^{(m)}\}_{m=1}^G)}
    {\text{std}(\{V^{(m)}\}_{m=1}^G)+\epsilon}.
\end{split}
\end{equation}
Stage~2 makes the overall advantage distribution stable even when $\mathbf w^{(t)}$ changes, preventing inadvertent iteration-to-iteration rescaling of the effective GRPO update. Empirical evidence of advantage 
variance dynamics is provided in Appendix~\ref{app:dynamics}.

\subsubsection{\texorpdfstring{$\bm{\mathcal{P}^3}$}{P3} Adaptive Priority Mechanism}
\label{sec:p3}

APEX assigns each objective $k$ a priority score $\Psi_k^{(t)}$ and maps priorities to weights through softmax:
\begin{equation}
\label{eq:p3_priority}
\begin{split}
    \Psi_k^{(t)} &= \mathcal{P}^{\text{learn}}_k \cdot \mathcal{P}^{\text{conflict}}_k \cdot \mathcal{P}^{\text{need}}_k, \\
    w_k^{(t)} &= \frac{\exp(\Psi_k^{(t)}/\tau)}{\sum_{\ell=1}^K \exp(\Psi_\ell^{(t)}/\tau)}.
\end{split}
\end{equation}

\bm{$\mathcal{P}^3$} is motivated from the local dynamics of
optimizing the scalarized objective $J(\theta; \mathbf{w}) =$
$\sum_k w_k J_k(\theta)$. The multiplicative aggregation follows 
non-compensatory selection principles in multi-criteria 
optimization~\citep{Marler2004} (derivation in Appendix~\ref{app:multiplication}).
A gradient step gives $\Delta\theta \propto$
$\sum_\ell w_\ell\nabla J_\ell$, and the first-order change of objective
$k$ is
\begin{equation}
\label{eq:deltaJk_in_p3}
\begin{split}
    \Delta J_k &\approx \nabla J_k^\top \Delta\theta \\
    &= \eta \sum_{\ell=1}^K w_\ell\ \nabla J_k^\top \nabla J_\ell \\
    &= \eta \bigg( w_k \|\nabla J_k\|^2 \\
    &\quad + \sum_{\ell\neq k} w_\ell \|\nabla J_k\|\,\|\nabla J_\ell\|\cos\phi_{k,\ell} \bigg).
\end{split}
\end{equation}
Eq.~\eqref{eq:deltaJk_in_p3} suggests using (i) gradient magnitude as a \textit{learning-signal strength} proxy and (ii) cosine similarity as an \textit{inter-objective interaction} (synergy/conflict) proxy. Since this local approximation does not reflect \textit{long-term objective saturation}, we further introduce a \textit{progress need} signal based on distance to an empirical upper bound.

\paragraph{Learning potential (LP).}
We define
\begin{equation}
\label{eq:p3_learn}
\mathcal{P}^{\text{learn}}_k=
\frac{\|\nabla_\theta J_k(\theta_t)\|}
{\sum_{\ell=1}^K \|\nabla_\theta J_\ell(\theta_t)\|+\epsilon},
\end{equation}
where $\epsilon$ is a small constant (set to $10^{-8}$) for 
numerical stability, preventing division by zero.
Gradient estimation details are provided in Appendix~\ref{app:implementation}.

\paragraph{Conflict penalty (CP).}
We penalize objectives that conflict with others:

\begin{equation}
\label{eq:p3_conflict}
\begin{split}
    \mathcal{P}^{\text{conflict}}_k &= 1 + \frac{1}{K-1}\sum_{\ell\neq k}\min\big(0,\ \cos\phi_{k,\ell}\big), \\[2pt]
    \cos\phi_{k,\ell} &= \frac{\langle \nabla J_k,\nabla J_\ell\rangle}
    {\|\nabla J_k\|\,\|\nabla J_\ell\|+\epsilon}.
\end{split}\raisetag{20pt}
\end{equation}


\paragraph{Progress need (PN).}
Let $U_k$ be an empirical upper bound (utopia point) and $\bar{R}_k^{(t)}$ be a running performance estimate. We define

\begin{equation}
\label{eq:p3_need}
\mathcal{P}^{\text{need}}_k=
1 + \max\Big(0,\ \frac{U_k-\bar{R}_k^{(t)}}{U_k+\epsilon}\Big).
\end{equation}
Estimation details for $U_k$ and $\bar{R}_k^{(t)}$ are provided in 
Appendix~\ref{app:implementation}, enabling bottleneck identification.

\paragraph{Default Settings.}
The ${\mathcal{P}^3}$ mechanism is designed for adaptive scheduling rather than 
monotonic convergence. It possesses formal stability guarantees: 
softmax normalization ensures weights never vanish or concentrate 
on a single objective, with weight ratios bounded by $\exp(2/\tau)$ 
(Appendix~\ref{app:theory} for proofs).

Unless otherwise stated, we set $\tau=1$.
Putting everything together, APEX replaces the naive advantage in Eq.~\eqref{eq:naive_mo_grpo} with DSAN (Eqs.~\eqref{eq:dsan_stage1}--\eqref{eq:dsan_stage2}) and updates weights using $\bm{\mathcal{P}^3}$ (Eq.~\eqref{eq:p3_priority}).

\begin{table*}[t]
\centering
\small
\begin{tabular}{lccccc}
\toprule
\multirow{2}{*}{\textbf{Model}} & \textbf{Text Rendering} & \textbf{Human Pref.} & \multicolumn{2}{c}{\textbf{Image Quality}} & \multirow{2}{*}{\textbf{Hypervolume}} \\
\cmidrule(lr){2-2} \cmidrule(lr){3-3} \cmidrule(lr){4-5}
& OCR Acc. ($\uparrow$) & PickScore ($\uparrow$) & DeQA ($\uparrow$) & Aesthetic ($\uparrow$) & ($\uparrow$) \\
\midrule
\multicolumn{6}{c}{\textit{Base Model}} \\
\midrule
SD3.5-M & 0.59 & 21.72 & 4.07 & 5.39 & - \\
\midrule
\multicolumn{6}{c}{\textit{Single-Objective Specialists}} \\
\midrule
Flow-GRPO (OCR-Only) & \textbf{0.92} & 22.44 & 4.06 & 5.32 & 0.00 \\
Flow-GRPO (PickScore-Only) & 0.69 & \textbf{23.53} & 4.22 & \textbf{5.92} & 1.11 \\
\midrule
\multicolumn{6}{c}{\textit{Multi-Objective Generalists}} \\
\midrule
Flow-GRPO (Static-Weight) & \underline{0.88} & 22.62 & 4.24 & 5.51 & 0.41 \\
\textbf{APEX (Ours)} & 0.83 & \underline{23.03} & \textbf{4.42} & \textbf{5.92} & \textbf{4.49} \\
\bottomrule
\end{tabular}
\caption{\textbf{Main Results.} Performance comparison on text rendering (evaluated on OCR test set), as well as human preference and image quality (both evaluated on DrawBench). Hypervolume approximates each model's Pareto contribution as the product of normalized improvements over the base model (SD3.5-M). Metrics are normalized for HV computation: OCR$\in[0,1]$ (inherently scaled), 
PickScore/26, DeQA/5, Aesthetic/10 (see Appendix~\ref{app:hypervolume}).
\textbf{Bold} indicates the best performance across all models (ties included), while \underline{underline} marks the best within the multi-objective group when it differs from the global best.}
\label{tab:main_results}
\end{table*}

\section{Experiment}

We evaluate APEX on Stable Diffusion 3.5 Medium (SD3.5-m)~\citep{esser2024scaling} across four heterogeneous objectives to answer: 
(\textbf{i}) Does APEX improve Pareto trade-offs over static weighting? 
(\textbf{ii}) How does APEX resolve variance hijacking and gradient conflicts? 
(\textbf{iii}) Are DSAN and $\mathcal{P}^3$ both necessary?
Section~\ref{sec:setup} describes experimental setup; 
Section~\ref{sec:main} presents main results; 
Section~\ref{sec:analysis} analyzes training dynamics; 
Section~\ref{sec:ablations} validates each component via ablation studies.

\subsection{Experimental Setup}
\label{sec:setup}

\paragraph{Objectives.}
Four heterogeneous reward functions are selected, spanning from structural constraints to perceptual qualities: 
(1) \textbf{OCR}~\citep{NEURIPS2022_ec795aea}: 
measuring text fidelity via normalized Levenshtein distance, representing discrete structural constraints with high variance; 
(2) \textbf{PickScore}~\citep{kirstain2023pickapicopendatasetuser}: human preference model for image-text alignment trained on large-scale feedback; 
(3) \textbf{DeQA}~\citep{You_2025_CVPR}: multi-modal LLM-based metric quantifying distortions and low-level artifacts; 
(4) \textbf{Aesthetic Score}~\citep{NEURIPS2022_a1859deb}: CLIP-based regressor for aesthetic appeal.
OCR is evaluated on the Flow-GRPO test set~\citep{liu2025flowgrpotrainingflowmatching}, while others use DrawBench~\citep{NEURIPS2022_ec795aea}.

\paragraph{Baselines.}
We compare APEX against Static Linear Scalarization, the standard approach using equal fixed weights ($w_i{=}1/K$), a common baseline isolating the effect of dynamic weighting. 
Single-Objective Specialists (optimized for individual rewards) are reported as performance bounds.

\paragraph{Implementation Details.}
We utilize LoRA~\citep{hu2021loralowrankadaptationlarge} for efficient fine-tuning with GRPO~\citep{shao2024deepseekmathpushinglimitsmathematical} adapted for flow matching to enable fast training on 8x NVIDIA A100 GPUs. 
Training employs 10 denoising steps (vs. 40 for inference) for efficiency. 
Key hyperparamters are: group size $G{=}24$, learning rate $3{\times}10^{-4}$, temperature $\tau{=}1$. 
We report single-run results following standard practice~\citep{black2024trainingdiffusionmodelsreinforcement,clark2024directlyfinetuningdiffusionmodels}.
See Appendix~\ref{app:implementation} for full details.

\subsection{Main Results}
\label{sec:main}

\subsubsection{Quantitative Evaluation}
Table~\ref{tab:main_results} compares APEX against baselines across four objectives. 
We report the un-tuned SD3.5-M (Base) model and Single-Objective Specialists to establish performance bounds. 

\vspace{0.5em}
\noindent \textbf{Limitations of Single-Objective Optimization.} 
Specialists achieve peak performance in their target domains at the cost 
of other objectives. 
The OCR-Only model's aesthetic score (5.32) and DeQA (4.06) both fall 
below the base model; such regressions yield zero hypervolume 
despite substantial OCR gains.
This confirms that single-objective RL causes excessive optimization 
of discrete structural constraints while sacrificing perceptual quality.

\vspace{0.5em}
\noindent \textbf{Variance Hijacking in Static Weighting.} 
The static baseline exhibits optimization imbalance. Although reaching OCR accuracy of 0.88, its gains in PickScore (+0.90 over base) and DeQA (+0.17) lag behind APEX (+1.31 and +0.35 respectively), while its aesthetic score (5.51) shows minimal improvement. 
This supports: without scale calibration, high-variance OCR signals implicitly ``hijack'' the optimization trajectory, starving low-variance objectives of gradient budget.

\vspace{0.5em}
\noindent \textbf{Pareto Advancement via APEX.} 
APEX achieves superior multi-objective balance through DSAN and $\mathcal{P}^3$. 
While maintaining competitive OCR (0.83 vs. 0.88 for static weighting), 
APEX substantially improves perceptual objectives: PickScore reaches 23.03 
(98\% of the specialist), DeQA achieves 4.42 (highest among all models), 
and Aesthetic attains 5.92 (tied for best). 
This demonstrates that adaptive scheduling enables near-specialist performance 
on individual metrics without sacrificing multi-objective balance.
APEX's hypervolume ($4.49{\times}10^{-5}$, 10.9$\times$ static weighting) 
quantifies this Pareto dominance (Appendix~\ref{app:hypervolume}).

\vspace{0.5em}
\subsubsection{Qualitative Validation}
Visual comparisons (Figure~\ref{fig:comparison}, Appendix~\ref{app:qualitative}) 
validate the quantitative findings. 
APEX demonstrates improved coordination of text rendering, semantic coherence, 
and visual context, addressing failure modes such as spelling errors and 
style inconsistencies that occasionally appear in Static-Weight generations.
In perceptual quality, APEX shows enhanced lighting modeling, color interaction, 
and physical plausibility, while Static-Weight exhibits more limited gains 
over the base model in these fine-grained aesthetic dimensions.

\subsection{Analysis of Training Dynamics}
\label{sec:analysis}

We analyze training dynamics to reveal why static scalarization fails 
and how APEX achieves adaptive multi-objective scheduling.

\begin{figure}[t]
  \centering
  \includegraphics[width=\columnwidth]{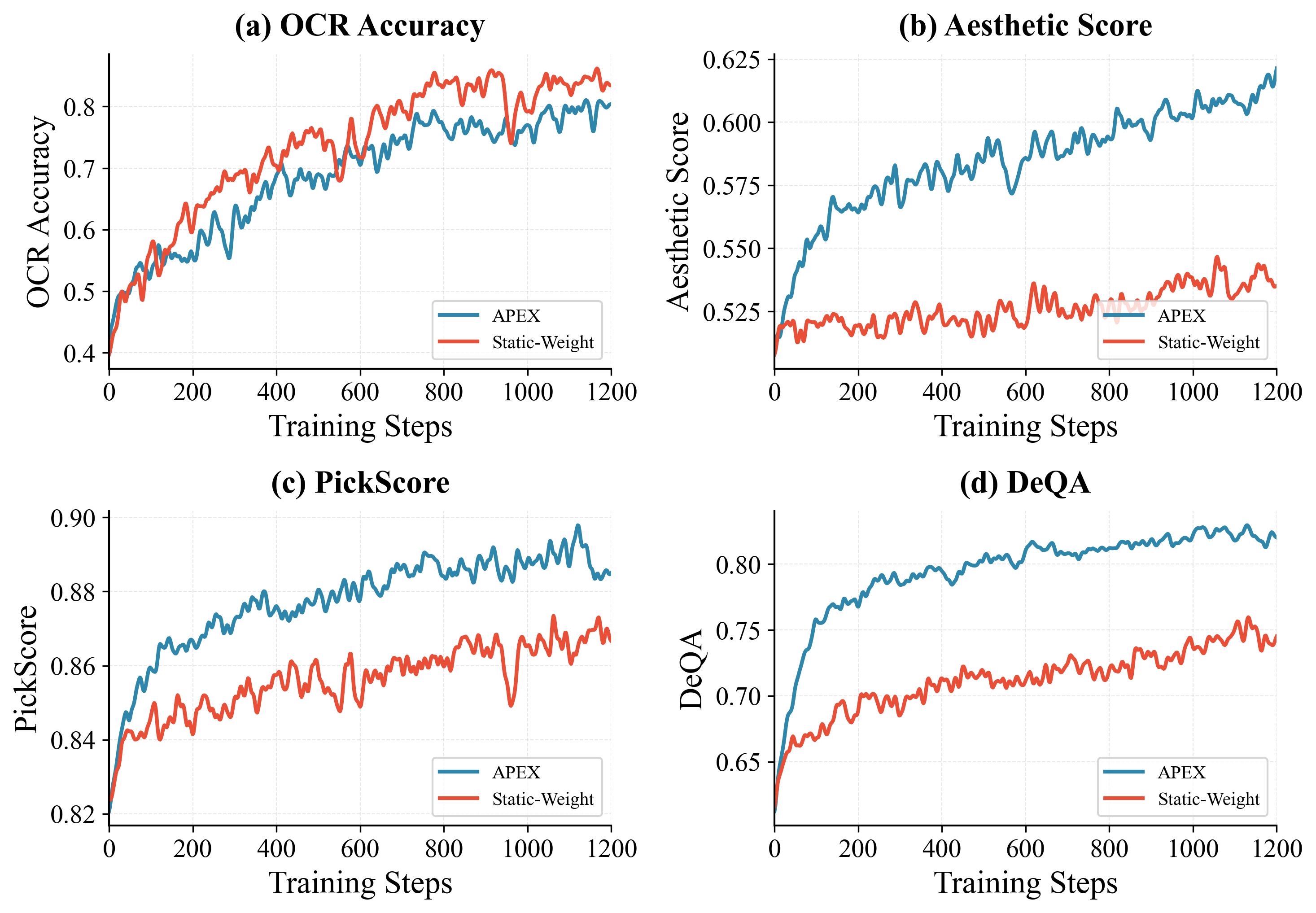} 
  \caption{\textbf{Training dynamics revealing variance hijacking.} Four subplots track four reward objectives across training steps, comparing APEX (blue) and Static-Weight baseline (red). The baseline shows OCR plateauing while Aesthetic stagnates, whereas APEX achieves balanced growth across all dimensions.}
  \label{fig:pathology}
\end{figure}

\vspace{0.5em}
\noindent \textbf{Revealing the Pathology of Variance Hijacking.} 

By comparing the training trajectories of APEX and the Static-Weight 
baseline (Fig.~\ref{fig:pathology}), we demonstrate the variance hijacking 
phenomenon.
In the static baseline, OCR accuracy plateaus near $\sim$0.88 
while normalized Aesthetic Score stagnates at $\sim$0.54 throughout training.  
Even after OCR plateaus, other objectives (image quality, human 
preference) fail to improve, indicating that high-variance OCR signals 
continue to dominate gradient updates, leaving low-variance objectives 
under-optimized.
In contrast, APEX achieves synchronized growth 
across all dimensions through DSAN's dual-stage normalization; Stage 2 
renormalization is verified by variance decay analysis in 
Appendix~\ref{app:dynamics}.

\begin{figure}[t]
  \centering
  \includegraphics[width=\columnwidth]{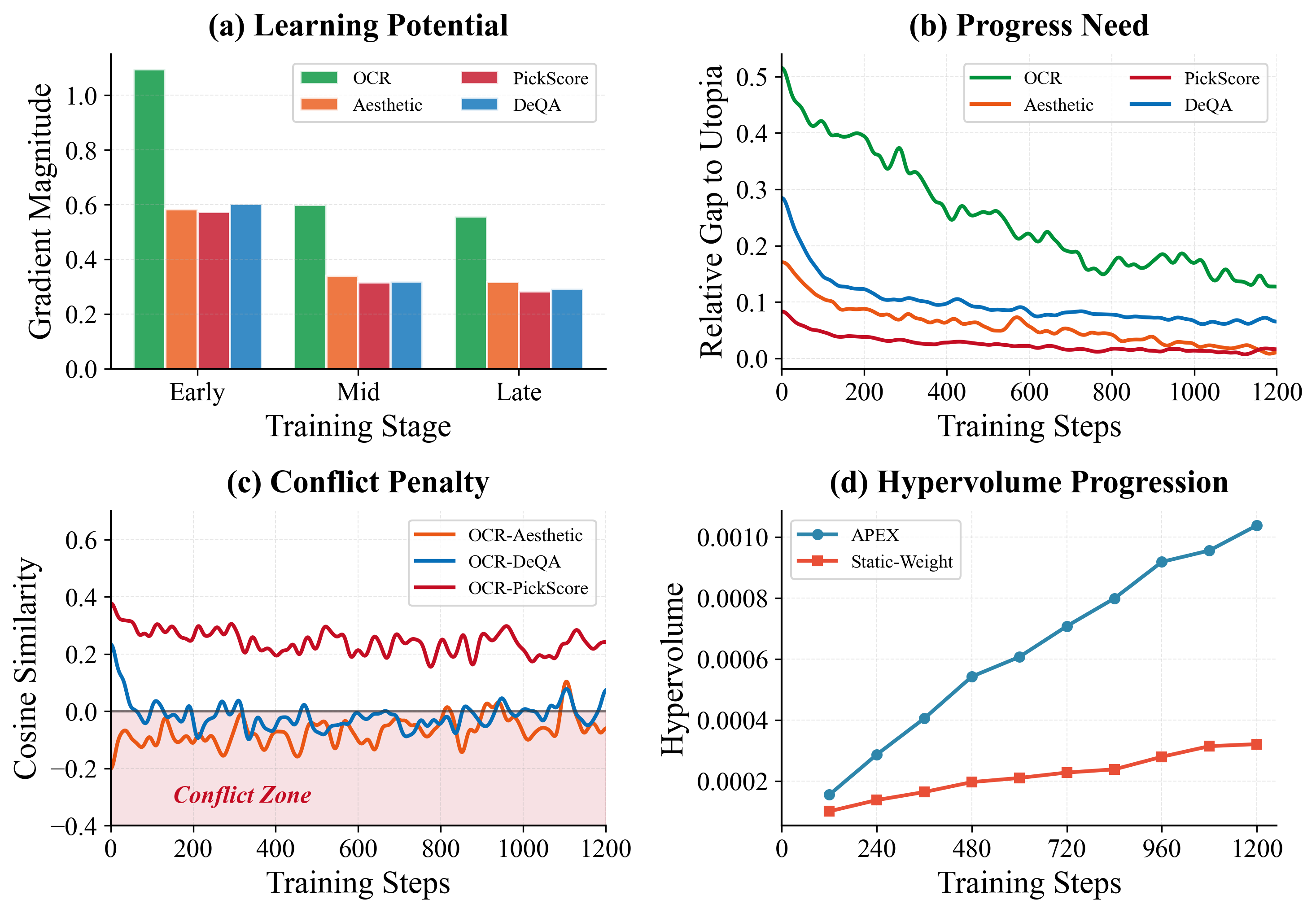}
  \caption{\textbf{Analysis of $\bm{\mathcal{P}^3}$ dynamics and Hypervolume progression.} (a–c) The three $\mathcal{P}^3$ factors—Learning Potential, Progress Need, and Conflict Penalty—which jointly guide adaptive weight scheduling. (d) Cumulative Hypervolume comparison showing APEX achieves 3.2$\times$ the dominated space volume of the static baseline.}
  \label{fig:analysis}
\end{figure}

\vspace{0.5em}
\noindent \textbf{Observations on the $\bm{\mathcal{P}^3}$ Mechanism.} 
We deconstruct how the three $\mathcal{P}^3$ factors jointly guide 
adaptive scheduling (Figure~\ref{fig:analysis}a-c). The \textbf{LP factor} 
tracks gradient magnitudes: OCR consistently exhibits high gradient norms, 
reflecting strong parameter sensitivity, and LP accordingly assigns higher 
base priority. The \textbf{PN factor} complements this by monitoring 
distance to empirical upper bounds (\textbf{Utopia Points}); objectives 
far from saturation receive additional emphasis, redirecting optimization 
toward bottleneck dimensions with maximal improvement potential. The 
\textbf{CP factor} addresses a distinct challenge: gradient conflicts 
exhibit ``bursty'' behavior, with intermittent severe negative correlations 
between objectives. CP acts as a dynamic shock absorber, detecting these 
conflict episodes and temporarily downweighting interfering objectives to 
prevent destructive gradient interference. 
Together, these three factors allow APEX to balance exploitation (LP), 
exploration of underperforming dimensions (PN), and conflict avoidance (CP).

\vspace{0.5em}
\noindent \textbf{Overall Capability Boundary Analysis.} 
Finally, we quantify overall optimization effectiveness using the Hypervolume 
(HV) metric (Figure~\ref{fig:analysis}d), which measures the volume of 
objective space dominated by a model's Pareto set. We evaluate 10 evenly-spaced 
checkpoints on a held-out test set, using early-training performance as the 
reference point (see Appendix~\ref{app:hypervolume} for details). The static 
baseline's hypervolume growth decelerates significantly in later stages 
(final HV $\approx 0.0003$), confirming its inability to escape the performance 
ceiling imposed by variance hijacking. In contrast, APEX maintains robust 
growth throughout training, ultimately achieving 3.2$\times$ the baseline's 
hypervolume (HV $\approx 0.0010$). 
This 3.2$\times$ improvement demonstrates that APEX successfully expands 
the Pareto frontier, improving perceptual objectives without sacrificing 
OCR capability.

\subsection{Ablation Studies}
\label{sec:ablations}

\begin{figure}[t]
  \centering
  \includegraphics[width=\columnwidth]{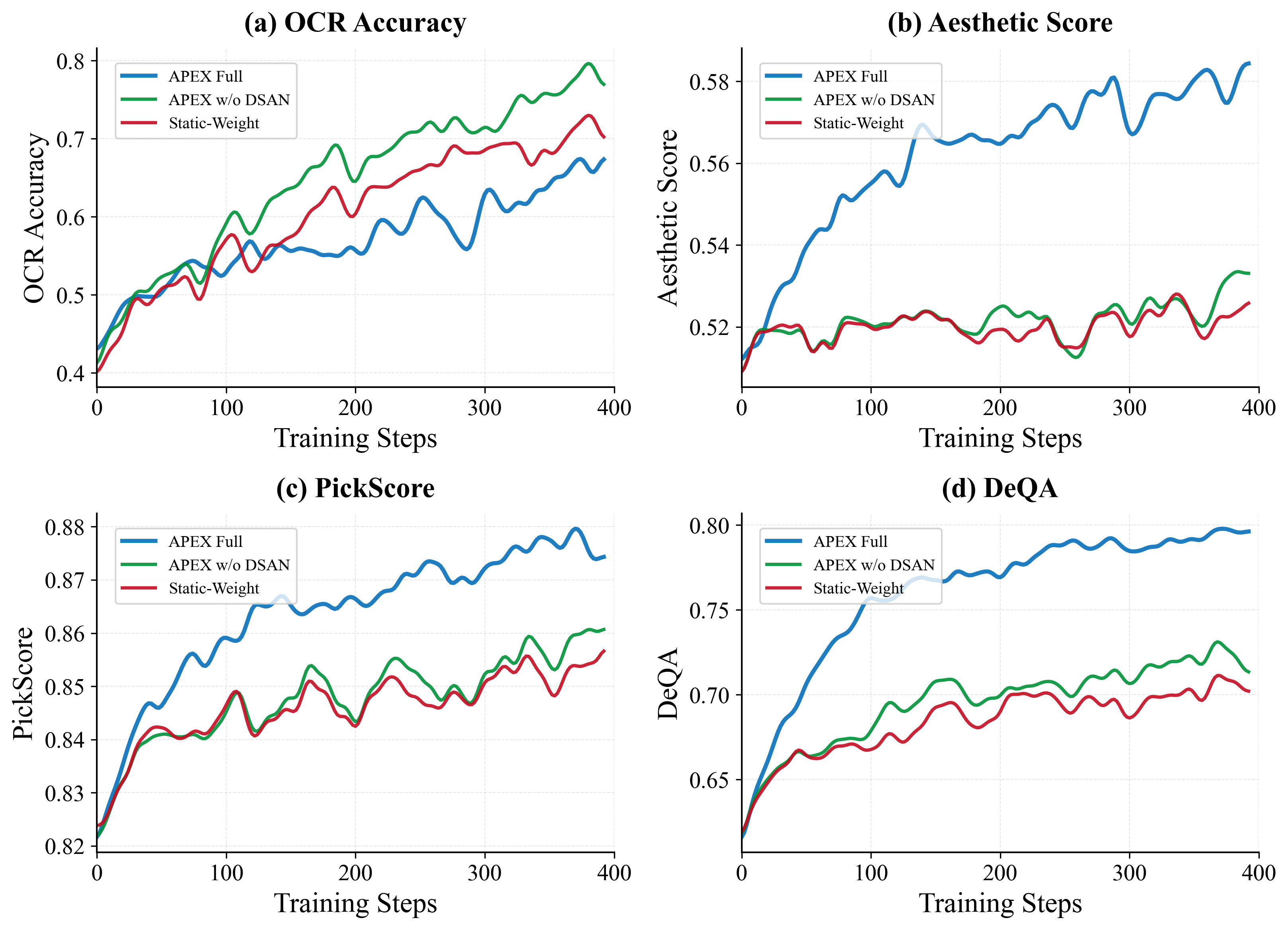}
  \caption{\textbf{Ablation study on DSAN.} Reward trajectories over the first 400 steps, comparing APEX Full, APEX w/o DSAN, and Static-Weight. Removing DSAN degrades convergence due to variance hijacking, yet the $\mathcal{P}^3$ mechanism still outperforms static weighting. This confirms that DSAN is essential for the adaptive priority mechanism to reach peak performance.}
  \label{fig:dsan_ablation}
\end{figure}

To verify the necessity of each component in APEX, we conduct two sets 
of ablation experiments: (1) removing DSAN to validate the scale calibration 
module, and (2) individually ablating each $\mathcal{P}^3$ factor to quantify 
their contributions. 
Due to computational constraints, we focus on early-to-mid training dynamics 
where component effects are most pronounced; extended stability analysis is 
provided in Appendix~\ref{app:ablation}.

\vspace{0.5em}
\noindent \textbf{Necessity of Scale Calibration (DSAN).} 
We compare APEX with a variant removing DSAN (\textit{APEX w/o DSAN}). 
Figure~\ref{fig:dsan_ablation} shows that both w/o DSAN and Static-Weight 
suffer from variance hijacking: perceptual objectives (Aesthetic, PickScore, 
DeQA in subplots b-d) stagnate throughout training.
However, w/o DSAN consistently outperforms Static-Weight across all objectives 
(Figure~\ref{fig:dsan_ablation}), with the most pronounced improvement in OCR 
(subplot a). This suggests that $\mathcal{P}^3$ improves weight allocation 
even without scale calibration.
Yet APEX Full substantially outperforms w/o DSAN 
across all perceptual dimensions.
This confirms that while $\mathcal{P}^3$ provides incremental benefits 
independently, DSAN is essential to fully overcome variance hijacking 
and unlock the full potential of adaptive multi-objective scheduling.

\begin{table}
\centering
\small
\begin{tabular}{lcc}
\toprule
\textbf{Variant} & \textbf{Cumulative HV} & \textbf{$\Delta$ vs Full} \\
\midrule
APEX  & $4.26 \times 10^{-4}$ & — \\
\quad \textit{w/o LP} & $3.73 \times 10^{-4}$ & $-12.5\%$ \\
\quad \textit{w/o CP} & $3.65 \times 10^{-4}$ & $-14.5\%$ \\
\quad \textit{w/o PN} & $3.82 \times 10^{-4}$ & $-10.3\%$ \\
\bottomrule
\end{tabular}
\caption{\textbf{$\bm{\mathcal{P}^3}$ Factor Ablation.} Cumulative Hypervolume 
computed over training rewards (50-step averaging window) up to step 600 
(see Appendix~\ref{app:hypervolume} for details). 
Removing any factor leads to 10--15\% degradation, with conflict penalty 
(CP) showing the largest impact.}

\label{tab:p3_ablation}
\end{table}

\vspace{0.5em}
\noindent \textbf{Contribution of $\bm{\mathcal{P}^3}$ Factors.} 
Table~\ref{tab:p3_ablation} presents cumulative Hypervolume when individually 
removing each factor. 
\textbf{Conflict penalty (CP)} has the largest impact: removing it causes 
$-14.5\%$ degradation, as competing gradient directions trigger oscillations 
without CP's damping effect.
\textbf{Learning potential (LP)} ablation leads to $-12.5\%$ drop, 
demonstrating the importance of gradient-based priority scheduling. 
\textbf{Progress need (PN)} removal results in $-10.3\%$ degradation, 
showing that monitoring distance to upper bounds prevents premature saturation.
The comparable magnitudes (10--15\%) indicate all three factors 
are necessary, addressing a distinct aspect of multi-objective 
optimization.

\section{Conclusion}

In this paper, we introduce APEX to address optimization imbalance in multi-objective 
vision-language alignment, where heterogeneous rewards cause overfit 
to high-variance objectives. Our analysis identified two root causes: 
\textbf{variance hijacking}, where high-variance objectives dominate 
gradient updates, and \textbf{gradient conflicts} between competing directions.
APEX resolves these through Dual-Stage Adaptive Normalization (DSAN) 
and $\mathcal{P}^3$ Adaptive Priorities. On Stable Diffusion 3.5, 
APEX achieves 10.9$\times$ the hypervolume of static scalarization 
with balanced improvements (+1.31 PickScore, +0.35 DeQA, +0.53 Aesthetics) 
while maintaining competitive OCR, providing a principled framework 
for multi-objective alignment.

\clearpage
\section*{Limitations}

Despite its effectiveness, APEX has several limitations that warrant 
future investigation. 
APEX requires gradient estimation for priority computation. While manageable, this overhead scales with the number of objectives.
Moreover, constrained by computational resources, our validation is limited to SD3.5-Medium. 
Our future work will explore generalizing APEX across diverse architectures (SDXL, Flux) and modalities (video, audio), as well as to LLM alignment tasks, which face similar multi-objective optimization challenges.

\bibliography{anthology,custom}

\begin{thebibliography}{32}
\providecommand{\natexlab}[1]{#1}

\bibitem[{Albergo et~al.(2023)Albergo, Boffi, and Vanden{-}Eijnden}]{albergo2023stochastic}
Michael~S. Albergo, Nicholas~M. Boffi, and Eric Vanden{-}Eijnden. 2023.
\newblock \href {https://doi.org/10.48550/ARXIV.2303.08797} {Stochastic interpolants: {A} unifying framework for flows and diffusions}.
\newblock \emph{CoRR}, abs/2303.08797.

\bibitem[{Anderson(1982)}]{anderson1982reverse}
Brian~D.O. Anderson. 1982.
\newblock \href {https://doi.org/10.1016/0304-4149(82)90051-5} {Reverse-time diffusion equation models}.
\newblock \emph{Stochastic Processes and their Applications}, 12(3):313--326.

\bibitem[{Bie et~al.(2025)Bie, Yang, Zhou, Ghanem, Zhang, Yao, Wu, Holmes, Golnari, Clifton, He, Tao, and Song}]{bie2023renaissancesurveyaitexttoimage}
Fengxiang Bie, Yibo Yang, Zhongzhu Zhou, Adam Ghanem, Minjia Zhang, Zhewei Yao, Xiaoxia Wu, Connor Holmes, Pareesa~Ameneh Golnari, David~A. Clifton, Yuxiong He, Dacheng Tao, and Shuaiwen~Leon Song. 2025.
\newblock \href {https://doi.org/10.1109/TPAMI.2024.3522305} {Renaissance: {A} survey into {AI} text-to-image generation in the era of large model}.
\newblock \emph{{IEEE} Trans. Pattern Anal. Mach. Intell.}, 47(3):2212--2231.

\bibitem[{Black et~al.(2024)Black, Janner, Du, Kostrikov, and Levine}]{black2024trainingdiffusionmodelsreinforcement}
Kevin Black, Michael Janner, Yilun Du, Ilya Kostrikov, and Sergey Levine. 2024.
\newblock \href {https://openreview.net/forum?id=YCWjhGrJFD} {Training diffusion models with reinforcement learning}.
\newblock In \emph{The Twelfth International Conference on Learning Representations, {ICLR} 2024, Vienna, Austria, May 7-11, 2024}. OpenReview.net.

\bibitem[{Clark et~al.(2024)Clark, Vicol, Swersky, and Fleet}]{clark2024directlyfinetuningdiffusionmodels}
Kevin Clark, Paul Vicol, Kevin Swersky, and David~J. Fleet. 2024.
\newblock \href {https://openreview.net/forum?id=1vmSEVL19f} {Directly fine-tuning diffusion models on differentiable rewards}.
\newblock In \emph{The Twelfth International Conference on Learning Representations, {ICLR} 2024, Vienna, Austria, May 7-11, 2024}. OpenReview.net.

\bibitem[{Deb et~al.(2002)Deb, Pratap, Agarwal, and Meyarivan}]{deb2002fast}
K.~Deb, A.~Pratap, S.~Agarwal, and T.~Meyarivan. 2002.
\newblock \href {https://doi.org/10.1109/4235.996017} {A fast and elitist multiobjective genetic algorithm: {NSGA-II}}.
\newblock \emph{IEEE Transactions on Evolutionary Computation}, 6(2):182--197.

\bibitem[{Esser et~al.(2024)Esser, Kulal, Blattmann, Entezari, M{\"u}ller, Saini, Levi, Lorenz, Sauer, Boesel, Podell, Dockhorn, English, and Rombach}]{esser2024scaling}
Patrick Esser, Sumith Kulal, Andreas Blattmann, Rahim Entezari, Jonas M{\"u}ller, Harry Saini, Yam Levi, Dominik Lorenz, Axel Sauer, Frederic Boesel, Dustin Podell, Tim Dockhorn, Zion English, and Robin Rombach. 2024.
\newblock \href {https://openreview.net/forum?id=FPnUhsQJ5B} {Scaling rectified flow transformers for high-resolution image synthesis}.
\newblock In \emph{Forty-first International Conference on Machine Learning}.

\bibitem[{Fan et~al.(2023)Fan, Watkins, Du, Liu, Ryu, Boutilier, Abbeel, Ghavamzadeh, Lee, and Lee}]{NEURIPS2023_fc65fab8}
Ying Fan, Olivia Watkins, Yuqing Du, Hao Liu, Moonkyung Ryu, Craig Boutilier, Pieter Abbeel, Mohammad Ghavamzadeh, Kangwook Lee, and Kimin Lee. 2023.
\newblock \href {https://proceedings.neurips.cc/paper_files/paper/2023/file/fc65fab891d83433bd3c8d966edde311-Paper-Conference.pdf} {Dpok: Reinforcement learning for fine-tuning text-to-image diffusion models}.
\newblock In \emph{Advances in Neural Information Processing Systems}, volume~36, pages 79858--79885. Curran Associates, Inc.

\bibitem[{Fonseca and Fleming(1996)}]{fonseca1996performance}
Carlos~M. Fonseca and Peter~J. Fleming. 1996.
\newblock \href {https://doi.org/10.1007/3-540-61723-X\_1022} {On the performance assessment and comparison of stochastic multiobjective optimizers}.
\newblock In \emph{Parallel Problem Solving from Nature - {PPSN} {IV}, International Conference on Evolutionary Computation, Berlin, Germany, September 22-26, 1996, Proceedings}, volume 1141 of \emph{Lecture Notes in Computer Science}, pages 584--593. Springer.

\bibitem[{Fonseca et~al.(2006)Fonseca, Paquete, and L{\'o}pez{-}Ib{\'a}{\~n}ez}]{fonseca2006improved}
Carlos~M. Fonseca, Lu{\'\i}s Paquete, and Manuel L{\'o}pez{-}Ib{\'a}{\~n}ez. 2006.
\newblock \href {https://doi.org/10.1109/CEC.2006.1688440} {An improved dimension-sweep algorithm for the hypervolume indicator}.
\newblock In \emph{2006 {IEEE} International Conference on Evolutionary Computation, {CEC} 2006, Vancouver, BC, Canada, July 16-21, 2006}, pages 1157--1163. {IEEE}.

\bibitem[{Ho et~al.(2020)Ho, Jain, and Abbeel}]{ho2020denoisingdiffusionprobabilisticmodels}
Jonathan Ho, Ajay Jain, and Pieter Abbeel. 2020.
\newblock \href {https://proceedings.neurips.cc/paper/2020/hash/4c5bcfec8584af0d967f1ab10179ca4b-Abstract.html} {Denoising diffusion probabilistic models}.
\newblock In \emph{Advances in Neural Information Processing Systems 33: Annual Conference on Neural Information Processing Systems 2020, NeurIPS 2020, December 6-12, 2020, virtual}.

\bibitem[{Hu et~al.(2022)Hu, Shen, Wallis, Allen{-}Zhu, Li, Wang, Wang, and Chen}]{hu2021loralowrankadaptationlarge}
Edward~J. Hu, Yelong Shen, Phillip Wallis, Zeyuan Allen{-}Zhu, Yuanzhi Li, Shean Wang, Lu~Wang, and Weizhu Chen. 2022.
\newblock \href {https://openreview.net/forum?id=nZeVKeeFYf9} {Lora: Low-rank adaptation of large language models}.
\newblock In \emph{The Tenth International Conference on Learning Representations, {ICLR} 2022, Virtual Event, April 25-29, 2022}. OpenReview.net.

\bibitem[{Jiang et~al.(2025)Jiang, Guo, Zhang, Zong, Li, Zhuo, Yan, Heng, and Li}]{jiang2025t2ir1reinforcingimagegeneration}
Dongzhi Jiang, Ziyu Guo, Renrui Zhang, Zhuofan Zong, Hao Li, Le~Zhuo, Shilin Yan, Pheng{-}Ann Heng, and Hongsheng Li. 2025.
\newblock \href {https://doi.org/10.48550/ARXIV.2505.00703} {{T2I-R1:} reinforcing image generation with collaborative semantic-level and token-level cot}.
\newblock \emph{CoRR}, abs/2505.00703.

\bibitem[{Kirstain et~al.(2023)Kirstain, Polyak, Singer, Matiana, Penna, and Levy}]{kirstain2023pickapicopendatasetuser}
Yuval Kirstain, Adam Polyak, Uriel Singer, Shahbuland Matiana, Joe Penna, and Omer Levy. 2023.
\newblock \href {http://papers.nips.cc/paper\_files/paper/2023/hash/73aacd8b3b05b4b503d58310b523553c-Abstract-Conference.html} {Pick-a-pic: An open dataset of user preferences for text-to-image generation}.
\newblock In \emph{Advances in Neural Information Processing Systems 36: Annual Conference on Neural Information Processing Systems 2023, NeurIPS 2023, New Orleans, LA, USA, December 10 - 16, 2023}.

\bibitem[{Kumar et~al.(2023)Kumar, Koppel, and Ribeiro}]{kumar2023sample}
Harshat Kumar, Alec Koppel, and Alejandro Ribeiro. 2023.
\newblock \href {https://doi.org/10.1007/s10994-023-06303-2} {On the sample complexity of actor-critic method for reinforcement learning with function approximation}.
\newblock \emph{Machine Learning}, 112(7):2433--2467.

\bibitem[{Lee et~al.(2024)Lee, Li, Ke, Yoo, Zhang, Yu, Wang, Deng, Entis, He, Li, Kim, Essa, and Yang}]{lee2024parrot}
Seung~Hyun Lee, Yinxiao Li, Junjie Ke, Innfarn Yoo, Han Zhang, Jiahui Yu, Qifei Wang, Fei Deng, Glenn Entis, Junfeng He, Gang Li, Sangpil Kim, Irfan Essa, and Feng Yang. 2024.
\newblock \href {https://doi.org/10.1007/978-3-031-72920-1\_26} {Parrot: Pareto-optimal multi-reward reinforcement learning framework for text-to-image generation}.
\newblock In \emph{Computer Vision - {ECCV} 2024 - 18th European Conference, Milan, Italy, September 29-October 4, 2024, Proceedings, Part {XXXVIII}}, volume 15096 of \emph{Lecture Notes in Computer Science}, pages 462--478. Springer.

\bibitem[{Lipman et~al.(2023)Lipman, Chen, Ben{-}Hamu, Nickel, and Le}]{lipman2023flowmatchinggenerativemodeling}
Yaron Lipman, Ricky T.~Q. Chen, Heli Ben{-}Hamu, Maximilian Nickel, and Matthew Le. 2023.
\newblock \href {https://openreview.net/forum?id=PqvMRDCJT9t} {Flow matching for generative modeling}.
\newblock In \emph{The Eleventh International Conference on Learning Representations, {ICLR} 2023, Kigali, Rwanda, May 1-5, 2023}. OpenReview.net.

\bibitem[{Liu et~al.(2025)Liu, Liu, Liang, Li, Liu, Wang, Wan, Zhang, and Ouyang}]{liu2025flowgrpotrainingflowmatching}
Jie Liu, Gongye Liu, Jiajun Liang, Yangguang Li, Jiaheng Liu, Xintao Wang, Pengfei Wan, Di~Zhang, and Wanli Ouyang. 2025.
\newblock \href {https://doi.org/10.48550/ARXIV.2505.05470} {Flow-grpo: Training flow matching models via online {RL}}.
\newblock \emph{CoRR}, abs/2505.05470.

\bibitem[{Lu et~al.(2025)Lu, Wang, Li, Liu, Yu, Yin, Shi, Zhang, and Jiang}]{lu2025learningoptimizemultiobjectivealignment}
Yining Lu, Zilong Wang, Shiyang Li, Xin Liu, Changlong Yu, Qingyu Yin, Zhan Shi, Zixuan Zhang, and Meng Jiang. 2025.
\newblock \href {https://doi.org/10.48550/ARXIV.2509.11452} {Learning to optimize multi-objective alignment through dynamic reward weighting}.
\newblock \emph{CoRR}, abs/2509.11452.

\bibitem[{Lyu et~al.(2025)Lyu, Chen, Wang, Shi, Gao, Piao, Zeng, Si, Ding, Li, Lau, and Wang}]{lyu2025multigrpomultigroupadvantageestimation}
Qiang Lyu, Zicong Chen, Chongxiao Wang, Haolin Shi, Shibo Gao, Ran Piao, Youwei Zeng, Jianlou Si, Fei Ding, Jing Li, Chun~Pong Lau, and Weiqiang Wang. 2025.
\newblock \href {https://arxiv.org/abs/2512.00743} {Multi-grpo: Multi-group advantage estimation for text-to-image generation with tree-based trajectories and multiple rewards}.
\newblock \emph{Preprint}, arXiv:2512.00743.

\bibitem[{Marler and Arora(2004)}]{Marler2004}
R~Timothy Marler and Jasbir~S Arora. 2004.
\newblock \href {https://link.springer.com/article/10.1007/s00158-003-0368-6} {Survey of multi-objective optimization methods for engineering}.
\newblock \emph{Structural and multidisciplinary optimization}, 26(6):369--395.

\bibitem[{{\O}ksendal(2003)}]{oksendal2003stochastic}
Bernt {\O}ksendal. 2003.
\newblock \href {https://link.springer.com/book/10.1007/978-3-642-14394-6} {Stochastic differential equations}.
\newblock In \emph{Stochastic differential equations: an introduction with applications}, pages 38--50. Springer.

\bibitem[{Rombach et~al.(2022)Rombach, Blattmann, Lorenz, Esser, and Ommer}]{Rombach_2022_CVPR}
Robin Rombach, Andreas Blattmann, Dominik Lorenz, Patrick Esser, and Bj{\"{o}}rn Ommer. 2022.
\newblock \href {https://doi.org/10.1109/CVPR52688.2022.01042} {High-resolution image synthesis with latent diffusion models}.
\newblock In \emph{{IEEE/CVF} Conference on Computer Vision and Pattern Recognition, {CVPR} 2022, New Orleans, LA, USA, June 18-24, 2022}, pages 10674--10685. {IEEE}.

\bibitem[{Saharia et~al.(2022)Saharia, Chan, Saxena, Li, Whang, Denton, Ghasemipour, Gontijo~Lopes, Karagol~Ayan, Salimans, Ho, Fleet, and Norouzi}]{NEURIPS2022_ec795aea}
Chitwan Saharia, William Chan, Saurabh Saxena, Lala Li, Jay Whang, Emily~L Denton, Kamyar Ghasemipour, Raphael Gontijo~Lopes, Burcu Karagol~Ayan, Tim Salimans, Jonathan Ho, David~J Fleet, and Mohammad Norouzi. 2022.
\newblock \href {https://proceedings.neurips.cc/paper_files/paper/2022/file/ec795aeadae0b7d230fa35cbaf04c041-Paper-Conference.pdf} {Photorealistic text-to-image diffusion models with deep language understanding}.
\newblock In \emph{Advances in Neural Information Processing Systems}, volume~35, pages 36479--36494. Curran Associates, Inc.

\bibitem[{Schuhmann et~al.(2022)Schuhmann, Beaumont, Vencu, Gordon, Wightman, Cherti, Coombes, Katta, Mullis, Wortsman, Schramowski, Kundurthy, Crowson, Schmidt, Kaczmarczyk, and Jitsev}]{NEURIPS2022_a1859deb}
Christoph Schuhmann, Romain Beaumont, Richard Vencu, Cade Gordon, Ross Wightman, Mehdi Cherti, Theo Coombes, Aarush Katta, Clayton Mullis, Mitchell Wortsman, Patrick Schramowski, Srivatsa Kundurthy, Katherine Crowson, Ludwig Schmidt, Robert Kaczmarczyk, and Jenia Jitsev. 2022.
\newblock \href {https://proceedings.neurips.cc/paper_files/paper/2022/file/a1859debfb3b59d094f3504d5ebb6c25-Paper-Datasets_and_Benchmarks.pdf} {Laion-5b: An open large-scale dataset for training next generation image-text models}.
\newblock In \emph{Advances in Neural Information Processing Systems}, volume~35, pages 25278--25294. Curran Associates, Inc.

\bibitem[{Shao et~al.(2024)Shao, Wang, Zhu, Xu, Song, Bi, Zhang, Zhang, Li, Wu, and Guo}]{shao2024deepseekmathpushinglimitsmathematical}
Zhihong Shao, Peiyi Wang, Qihao Zhu, Runxin Xu, Junxiao Song, Xiao Bi, Haowei Zhang, Mingchuan Zhang, Y.~K. Li, Y.~Wu, and Daya Guo. 2024.
\newblock \href {https://arxiv.org/abs/2402.03300} {Deepseekmath: Pushing the limits of mathematical reasoning in open language models}.
\newblock \emph{Preprint}, arXiv:2402.03300.

\bibitem[{Wang et~al.(2017)Wang, Chen, Liu, Ma, and Liu}]{wang2017finite}
Yue Wang, Wei Chen, Yuting Liu, Zhi-Ming Ma, and Tie-Yan Liu. 2017.
\newblock \href {https://proceedings.neurips.cc/paper_files/paper/2017/file/353de26971b93af88da102641069b440-Paper.pdf} {Finite sample analysis of the {GTD} policy evaluation algorithms in {M}arkov setting}.
\newblock In \emph{Advances in Neural Information Processing Systems}, volume~30, pages 5504--5513. Curran Associates, Inc.

\bibitem[{Xu et~al.(2023)Xu, Liu, Wu, Tong, Li, Ding, Tang, and Dong}]{NEURIPS2023_33646ef0}
Jiazheng Xu, Xiao Liu, Yuchen Wu, Yuxuan Tong, Qinkai Li, Ming Ding, Jie Tang, and Yuxiao Dong. 2023.
\newblock \href {https://proceedings.neurips.cc/paper_files/paper/2023/file/33646ef0ed554145eab65f6250fab0c9-Paper-Conference.pdf} {Imagereward: Learning and evaluating human preferences for text-to-image generation}.
\newblock In \emph{Advances in Neural Information Processing Systems}, volume~36, pages 15903--15935. Curran Associates, Inc.

\bibitem[{You et~al.(2025)You, Cai, Gu, Xue, and Dong}]{You_2025_CVPR}
Zhiyuan You, Xin Cai, Jinjin Gu, Tianfan Xue, and Chao Dong. 2025.
\newblock \href {https://doi.org/10.1109/CVPR52734.2025.01350} {Teaching large language models to regress accurate image quality scores using score distribution}.
\newblock In \emph{{IEEE/CVF} Conference on Computer Vision and Pattern Recognition, {CVPR} 2025, Nashville, TN, USA, June 11-15, 2025}, pages 14483--14494. Computer Vision Foundation / {IEEE}.

\bibitem[{Zhang et~al.(2024)Zhang, Wang, Wu, Li, Gao, Zhang, and Wang}]{10655849}
Sixian Zhang, Bohan Wang, Junqiang Wu, Yan Li, Tingting Gao, Di~Zhang, and Zhongyuan Wang. 2024.
\newblock \href {https://doi.org/10.1109/CVPR52733.2024.00766} {Learning multi-dimensional human preference for text-to-image generation}.
\newblock In \emph{2024 IEEE/CVF Conference on Computer Vision and Pattern Recognition (CVPR)}, pages 8018--8027.

\bibitem[{Ziegler et~al.(2020)Ziegler, Stiennon, Wu, Brown, Radford, Amodei, Christiano, and Irving}]{ziegler2020finetuninglanguagemodelshuman}
Daniel~M. Ziegler, Nisan Stiennon, Jeffrey Wu, Tom~B. Brown, Alec Radford, Dario Amodei, Paul Christiano, and Geoffrey Irving. 2020.
\newblock \href {https://arxiv.org/abs/1909.08593} {Fine-tuning language models from human preferences}.
\newblock \emph{Preprint}, arXiv:1909.08593.

\bibitem[{Zitzler and Thiele(1999)}]{zitzler1999multiobjective}
Eckart Zitzler and Lothar Thiele. 1999.
\newblock \href {https://doi.org/10.1109/4235.797969} {Multiobjective evolutionary algorithms: {A} comparative case study and the strength pareto approach}.
\newblock \emph{{IEEE} Trans. Evol. Comput.}, 3(4):257--271.

\end{thebibliography}

\clearpage
\appendix
\section{Stochastic Flow Matching and GRPO Formulation}
\label{app:sde_derivation}

For completeness, we provide the derivation of the SDE formulation in Section~\ref{subsec:preliminaries}, following \citet{albergo2023stochastic, liu2025flowgrpotrainingflowmatching}. We adapt the notation to our multi-objective framework.

\subsection{From ODE to SDE}

Standard flow matching models use a deterministic ODE for generation:
$dx_j = v_\theta(x_j, j) dj$, where $x_j = (1-j)x_0 + jx_1$ with $x_0 \sim \mathcal{X}_0$ (data) and $x_1 \sim \mathcal{N}(0,I)$ (noise).
To enable stochastic sampling, we construct an SDE with matching marginal distributions.

Consider a forward SDE:
\begin{equation}
\begin{split}
dx_j = \big[ v_\theta(x_j, j) + \tfrac{\sigma_j^2}{2} \nabla \log p_j(x_j) \big] dj \\
+ \sigma_j dw,
\end{split}
\end{equation}
where $\sigma_j$ controls stochasticity. By the Fokker-Planck equation \citep{oksendal2003stochastic}, this SDE preserves the ODE's marginal density $p_j(x)$.

\paragraph{Reverse-time SDE.}
Applying the standard time-reversal formula \citep{anderson1982reverse}:
\begin{equation}
\label{eq:app_reverse}
\begin{split}
dx_j = \big[ v_\theta(x_j, j) - \tfrac{\sigma_j^2}{2} \nabla \log p_j(x_j) \big] dj \\
+ \sigma_j dw.
\end{split}
\end{equation}

\paragraph{Score function for rectified flow.}
For the linear interpolation $x_j = (1-j)x_0 + jx_1$, the conditional score is $\nabla \log p_{j|0}(x_j|x_0) = -x_1/j$. The marginal score is:
\begin{equation}
\label{eq:app_score}
\nabla \log p_j(x_j) = -\tfrac{1}{j}\mathbb{E}[x_1 | x_j].
\end{equation}

From the velocity field definition $v_\theta(x_j,j) = \mathbb{E}[x_1 - x_0 | x_j]$ and the interpolation relation, we derive:
\begin{equation}
\label{eq:app_x1_expect}
\mathbb{E}[x_1 | x_j] = (1-j)v_\theta(x_j,j) + x_j.
\end{equation}

Substituting Eq.~\eqref{eq:app_x1_expect} into Eq.~\eqref{eq:app_score}:
\begin{equation}
\begin{split}
\nabla \log p_j(x_j) = -\tfrac{x_j}{j} - \tfrac{1-j}{j}v_\theta(x_j,j).
\end{split}
\end{equation}

\paragraph{Final SDE form.}
Plugging this into Eq.~\eqref{eq:app_reverse}:
\begin{equation}
\label{eq:app_sde_final}
\begin{split}
dx_j &= v_\theta(x_j, j) dj \\
&\quad + \tfrac{\sigma_j^2}{2j} \big(x_j + (1-j)v_\theta(x_j, j)\big) dj \\
&\quad + \sigma_j dw,
\end{split}
\end{equation}
which is Eq.~\eqref{eq:fm_sde} in the main text.

\subsection{Euler-Maruyama Discretization}

Discretizing Eq.~\eqref{eq:app_sde_final} with time step $\Delta j = j_{t-1} - j_t < 0$:
\begin{equation}
\label{eq:app_em}
\begin{split}
x_{j_{t-1}} &= x_{j_t} + \big[ v_\theta(x_{j_t}, j_t) \\
&\quad + \tfrac{\sigma_{j_t}^2}{2j_t}(x_{j_t} + (1-j_t)v_\theta(x_{j_t}, j_t)) \big] \Delta j \\
&\quad + \sigma_{j_t}\sqrt{|\Delta j|} \, \epsilon,
\end{split}
\end{equation}
where $\epsilon \sim \mathcal{N}(0,I)$. This yields the Gaussian transition in Eq.~\eqref{eq:gauss_transition}:
\begin{equation}
\pi_\theta(x_{j_{t-1}} | x_{j_t}, c) = \mathcal{N}(x_{j_{t-1}}; \mu_\theta, \Sigma_{j_t}),
\end{equation}
with
\begin{align}
\mu_\theta &= x_{j_t} + \big[ v_\theta(x_{j_t}, j_t) \notag \\
&\quad + \tfrac{\sigma_{j_t}^2}{2j_t}(x_{j_t} + (1-j_t)v_\theta(x_{j_t}, j_t)) \big] \Delta j,\label{eq:app_mean} \\
\Sigma_{j_t} &= \sigma_{j_t}^2 |\Delta j| \cdot I. \label{eq:app_covariance}
\end{align}

Following \citet{liu2025flowgrpotrainingflowmatching}, we use 
$\sigma_j = a\sqrt{j/(1-j)}$ with $a=0.7$. The Gaussian form enables tractable computation of log-probabilities and likelihood ratios used by GRPO (formulated below).

\subsection{GRPO Objective for Flow Matching}
\label{app:grpo_details}

Building on the Gaussian transition in Eq.~\eqref{eq:gauss_transition}, 
GRPO optimizes:
\begin{equation}
\label{eq:grpo_objective}
\begin{split}
\mathcal{J}_{\text{GRPO}}(\theta) = \mathbb{E}_{c \sim \mathcal{D}} \bigg[ &\frac{1}{G} \sum_{i=1}^G \bigg( \frac{1}{T} \sum_{t=0}^{T-1} \\
&\min\big( r_t^{(i)} \hat{A}^{(i)}, \\
&\quad \text{clip}(r_t^{(i)}, 1-\epsilon, 1+\epsilon) \hat{A}^{(i)} \big) \\
&- \beta D_{\text{KL}}(\pi_\theta \| \pi_{\theta_{\text{old}}}) \bigg) \bigg],
\end{split}
\end{equation}
where $r_t^{(i)}$ is the likelihood ratio (Eq.~\ref{eq:ratio}), 
$\hat{A}^{(i)}$ is the group-relative advantage, and $D_{\text{KL}}$ 
is the trajectory-level KL divergence (see Appendix~\ref{app:kl_implementation} 
for analytical computation).

\clearpage
\section{KL Divergence Computation}
\label{app:kl_implementation}

This appendix provides (i) the analytical form of the KL divergence between Gaussian policies used in our GRPO objective, and (ii) explicit expressions for the discretization scheme referenced in Section~\ref{subsec:preliminaries}.

\subsection{KL Divergence for Gaussian Policies}

Given two Gaussian transition policies $\pi_\theta$ and $\pi_{\text{ref}}$ with distributions
\begin{align}
\pi_\theta(x_{j_{t-1}} | x_{j_t}, c) &= \mathcal{N}(x_{j_{t-1}}; \mu_\theta, \Sigma_{j_t}), \\
\pi_{\text{ref}}(x_{j_{t-1}} | x_{j_t}, c) &= \mathcal{N}(x_{j_{t-1}}; \mu_{\text{ref}}, \Sigma_{j_t}),
\end{align}
the per-step KL divergence has a closed form. Since both policies share the same covariance $\Sigma_{j_t} = \sigma_{j_t}^2 |\Delta j| \cdot I$ (as derived in Appendix~\ref{app:sde_derivation}), the KL divergence simplifies to:
\begin{equation}
\label{eq:app_kl_step}
\begin{split}
D_{\text{KL}}(\pi_\theta \| \pi_{\text{ref}}) 
&= \mathbb{E}_{x_{j_{t-1}} \sim \pi_\theta} \left[ \log \frac{\pi_\theta}{\pi_{\text{ref}}} \right] \\
&= \frac{1}{2\sigma_{j_t}^2 |\Delta j|} \| \mu_\theta - \mu_{\text{ref}} \|^2.
\end{split}
\end{equation}

Substituting the mean expressions from Eq.~\eqref{eq:app_mean} in Appendix~\ref{app:sde_derivation}:
\begin{equation}
\label{eq:app_kl_velocity}
\begin{split}
D_{\text{KL}}(\pi_\theta \| \pi_{\text{ref}}) 
&= \frac{(\Delta j)^2}{2\sigma_{j_t}^2 |\Delta j|} \Big\| v_\theta(x_{j_t}, j_t) \\
&\quad + \tfrac{\sigma_{j_t}^2}{2j_t}\big(x_{j_t} + (1-j_t)v_\theta\big) \\
&\quad - v_{\text{ref}}(x_{j_t}, j_t) \\
&\quad - \tfrac{\sigma_{j_t}^2}{2j_t}\big(x_{j_t} + (1-j_t)v_{\text{ref}}\big) \Big\|^2 \\
&= \frac{|\Delta j|}{2\sigma_{j_t}^2} \Big\| \big(1 + \tfrac{\sigma_{j_t}^2 (1-j_t)}{2j_t}\big) \\
&\quad \times (v_\theta - v_{\text{ref}}) \Big\|^2.
\end{split}
\end{equation}

For our noise schedule $\sigma_j = a\sqrt{j/(1-j)}$ with $a=0.7$, and uniform time grid $\Delta j = -1/T$, Eq.~\eqref{eq:app_kl_velocity} provides an efficient way to compute the KL penalty in the GRPO objective without expensive Monte Carlo estimation.

\paragraph{Trajectory-level KL.}
The trajectory-level KL divergence is:
\begin{align}
&D_{\text{KL}}(\pi_\theta(\tau|c) \| \pi_{\text{ref}}(\tau|c)) \notag \\
&= \sum_{t=0}^{T-1} D_{\text{KL}}(\pi_\theta(x_{j_{t-1}}|x_{j_t},c) 
  \| \pi_{\text{ref}}(x_{j_{t-1}}|x_{j_t},c)).
\end{align}

\subsection{Discretization Scheme Details}

As mentioned in Section~\ref{subsec:preliminaries}, we discretize the SDE (Eq.~\ref{eq:fm_sde}) using the Euler-Maruyama method. The mean and covariance of the resulting Gaussian transition (Eq.~\ref{eq:gauss_transition}) are given by Eqs.~\eqref{eq:app_mean} and \eqref{eq:app_covariance} in Appendix~\ref{app:sde_derivation}.

\paragraph{Time grid.} 
We use a uniform grid with $T$ steps: $j_t = t/T$ for $t = 0, 1, \ldots, T$, yielding constant step size $\Delta j = j_{t-1} - j_t = -1/T$ (negative for the reverse process).

\paragraph{Log-probability computation.}
The per-step log-probability is:
\begin{equation}
\begin{split}
\log \pi_\theta(x_{j_{t-1}} | x_{j_t}, c) 
&= -\tfrac{d}{2}\log(2\pi \sigma_{j_t}^2 |\Delta j|) \\
&\quad - \tfrac{\| x_{j_{t-1}} - \mu_\theta \|^2}{2\sigma_{j_t}^2 |\Delta j|},
\end{split}
\end{equation}
where $d = H \times W \times 3$ is the image dimensionality and $\mu_\theta \equiv \mu_\theta(x_{j_t}, j_t, c)$ from Eq.~\eqref{eq:app_mean}.

These closed-form expressions enable efficient computation of likelihood ratios (Eq.~\ref{eq:ratio}) and KL divergence in our GRPO-based optimization.

\section{Stability Analysis of APEX}
\label{app:theory}

Global convergence to Pareto optimality under stochastic gradients and 
dynamic weighting remains an open question. Following the analysis framework 
of \citet{lu2025learningoptimizemultiobjectivealignment}, we prove that 
the $\mathcal{P}^3$ weight update mechanism maintains numerical stability 
through bounded priority scores and softmax normalization: weights remain 
bounded and update smoothly without collapse or explosion.

\subsection{Why Multiplicative Aggregation}
\label{app:multiplication}

The $\mathcal{P}^3$ mechanism combines three factors via multiplication 
(Eq.~\ref{eq:p3_priority}):
\begin{equation}
\Psi_k = \mathcal{P}^{\text{learn}}_k \cdot \mathcal{P}^{\text{conflict}}_k 
\cdot \mathcal{P}^{\text{need}}_k.
\end{equation}
This design follows the geometric aggregation principle in multi-criteria 
optimization~\citep{Marler2004}, offering several advantages over 
additive formulations:

\paragraph{Non-compensatory aggregation.}
Additive combination $\Psi_k = \lambda_1 P^{\text{learn}}_k + \lambda_2 P^{\text{conflict}}_k + \lambda_3 P^{\text{need}}_k$ 
allows a high score in one factor to compensate for low scores in others. 
Multiplicative aggregation prevents this: low performance in any factor 
yields low overall priority, ensuring objectives are prioritized only when 
all three conditions (high gradient, low conflict, room for improvement) 
are simultaneously met.

\paragraph{Parameter-Free Combination.}
All factors are normalized to comparable ranges (Lemma~\ref{lem:bounded_factors}), 
so their product directly yields a composite score without requiring 
tuning of mixing coefficients $\{\lambda_1,\lambda_2,\lambda_3\}$.

\subsection{Assumptions}

We adopt standard assumptions from the RL and multi-objective 
optimization literature.

\begin{assumption}[Bounded Policy Gradients]
\label{ass:bounded_grad}
For each objective $k \in \{1,\dots,K\}$, the policy gradient is bounded: 
$\|\nabla_\theta J_k(\theta)\| \leq M < \infty$.
\end{assumption}

\begin{assumption}[Bounded Rewards]
\label{ass:bounded_reward}
All reward functions are bounded: $R_k(x_0, c) \in [0, R_{\max}]$ 
for all $k, x_0, c$.
\end{assumption}

\begin{assumption}[Empirical Utopia Points]
\label{ass:utopia}
The Utopia Point $U_k$ is set to the empirical upper bound achieved by 
single-objective specialist models (Section~4.1). The running performance 
estimate $\bar{R}_k^{(t)}$ tracks the model's current capability on 
objective $k$ during multi-objective training, typically remaining below $U_k$.
\end{assumption}

Assumptions~\ref{ass:bounded_grad} and~\ref{ass:bounded_reward} are 
standard in policy gradient analysis~\citep{wang2017finite,kumar2023sample}. 
In our setting, Assumption~\ref{ass:bounded_reward} holds because rewards 
are either normalized (OCR $\in$ [0,1]) or inherently bounded (PickScore, DeQA, Aesthetic). 
For Assumption~\ref{ass:utopia}, we set $U_k$ to single-objective 
specialist performance (Section~4.1).

\subsection{Bounded Priority Factors}

\begin{lemma}[Bounded Priority Factors]
\label{lem:bounded_factors}
Under Assumptions~\ref{ass:bounded_grad}, \ref{ass:bounded_reward}, 
and~\ref{ass:utopia}, the priority factors satisfy:
\begin{align}
\mathcal{P}^{\text{learn}}_k &\in [0, 1], \\
\mathcal{P}^{\text{conflict}}_k &\in [0, 1], \\
\mathcal{P}^{\text{need}}_k &\in [1, 2].
\end{align}
\end{lemma}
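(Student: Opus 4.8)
The plan is to verify each of the three bounds separately by directly inspecting the defining formulas, since each factor is a simple algebraic expression in quantities whose ranges are controlled by the assumptions. I would treat the three claims as independent sub-arguments and dispatch them in increasing order of subtlety: first $\mathcal{P}^{\text{need}}_k$, then $\mathcal{P}^{\text{learn}}_k$, then $\mathcal{P}^{\text{conflict}}_k$.

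For $\mathcal{P}^{\text{need}}_k \in [1,2]$: from Eq.~\eqref{eq:p3_need}, $\mathcal{P}^{\text{need}}_k = 1 + \max(0,\,(U_k - \bar{R}_k^{(t)})/(U_k+\epsilon))$. The $\max(0,\cdot)$ guarantees the added term is $\ge 0$, giving the lower bound $1$. For the upper bound, I would use Assumption~\ref{ass:bounded_reward} (rewards, hence their running estimates $\bar{R}_k^{(t)}$, are nonnegative) and Assumption~\ref{ass:utopia} ($U_k$ is a genuine empirical upper bound, so $U_k \ge \bar{R}_k^{(t)} \ge 0$); then $(U_k - \bar{R}_k^{(t)})/(U_k+\epsilon) \le U_k/(U_k+\epsilon) < 1$, so the whole expression is below $2$. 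For $\mathcal{P}^{\text{learn}}_k \in [0,1]$: from Eq.~\eqref{eq:p3_learn} it is a ratio of a single gradient norm to the sum of all gradient norms plus $\epsilon > 0$; nonnegativity is immediate since norms are nonnegative, and the upper bound follows because the numerator is one of the summands in the denominator (and $\epsilon>0$ only makes the denominator strictly larger). Assumption~\ref{ass:bounded_grad} is what rules out the degenerate $\infty/\infty$ situation and makes the ratio well-defined.

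The main obstacle is $\mathcal{P}^{\text{conflict}}_k \in [0,1]$. From Eq.~\eqref{eq:p3_conflict}, $\mathcal{P}^{\text{conflict}}_k = 1 + \frac{1}{K-1}\sum_{\ell \neq k}\min(0,\cos\phi_{k,\ell})$. The upper bound $1$ is easy: each $\min(0,\cdot)$ term is $\le 0$, so the correction is $\le 0$. The lower bound is the delicate part. Each $\cos\phi_{k,\ell}$ lies in $[-1,1]$ (Cauchy--Schwarz, noting the $+\epsilon$ in the denominator can only shrink the magnitude), so each clamped term is $\ge -1$; there are $K-1$ of them, so the sum is $\ge -(K-1)$, and dividing by $K-1$ gives a correction $\ge -1$, hence $\mathcal{P}^{\text{conflict}}_k \ge 0$. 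I would state the Cauchy--Schwarz step explicitly (including the harmless effect of $\epsilon$) since that is the one place where a reader might worry the cosine could exceed $1$ in magnitude; everything else is bookkeeping. Assembling the three sub-results yields the claimed ranges, completing the proof.
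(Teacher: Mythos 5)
Your proposal is correct and follows essentially the same route as the paper's proof: each factor is bounded by direct inspection of its defining formula, with $\min(0,\cos\phi_{k,\ell})\in[-1,0]$ giving $\mathcal{P}^{\text{conflict}}_k\in[0,1]$ and $U_k\ge\bar{R}_k^{(t)}\ge 0$ giving $\mathcal{P}^{\text{need}}_k\in[1,2)$. Your explicit invocation of Cauchy--Schwarz (and the harmless effect of $\epsilon$) merely spells out a step the paper leaves implicit.
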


\begin{proof}
(i) By definition (Eq.~\ref{eq:p3_learn}), 
$\mathcal{P}^{\text{learn}}_k$ is a normalized fraction, hence $\in [0, 1]$.

(ii) From Eq.~\ref{eq:p3_conflict}:
\begin{equation}
\mathcal{P}^{\text{conflict}}_k = 1 + \frac{1}{K-1}\sum_{\ell\neq k}
\min(0, \cos\phi_{k,\ell}).
\end{equation}
Since $\min(0, \cos\phi) \in [-1, 0]$, the sum ranges in $[-(K-1), 0]$, 
giving $\mathcal{P}^{\text{conflict}}_k \in [0, 1]$.

(iii) From Eq.~\ref{eq:p3_need} with 
$U_k > \bar{R}_k^{(t)} \geq 0$ (Assumptions~\ref{ass:bounded_reward} 
and~\ref{ass:utopia}):
\begin{equation}
\begin{split}
\mathcal{P}^{\text{need}}_k &= 1 + \max\left(0, \frac{U_k - \bar{R}_k^{(t)}}{U_k + \epsilon}\right) \\
&\in \left[1, 1 + \frac{U_k}{U_k+\epsilon}\right) \subseteq [1, 2).
\end{split}
\end{equation}
\end{proof}

\begin{corollary}[Bounded Composite Priority]
The composite priority score satisfies 
$\Psi_k^{(t)} = \mathcal{P}^{\text{learn}}_k \cdot \mathcal{P}^{\text{conflict}}_k 
\cdot \mathcal{P}^{\text{need}}_k \in [0, 2]$.
\end{corollary}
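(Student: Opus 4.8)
The statement is the Corollary asserting $\Psi_k^{(t)} = \mathcal{P}^{\text{learn}}_k \cdot \mathcal{P}^{\text{conflict}}_k \cdot \mathcal{P}^{\text{need}}_k \in [0,2]$. This follows immediately from the three factor bounds already established in Lemma~\ref{lem:bounded_factors}, so the proof is essentially a one-line multiplication of intervals. The plan is to invoke the Lemma, note that all three factors are nonnegative, and multiply the upper bounds.

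First I would recall from Lemma~\ref{lem:bounded_factors} that $\mathcal{P}^{\text{learn}}_k \in [0,1]$, $\mathcal{P}^{\text{conflict}}_k \in [0,1]$, and $\mathcal{P}^{\text{need}}_k \in [1,2)$. Since each factor is nonnegative, the product is nonnegative, giving the lower bound $\Psi_k^{(t)} \ge 0$. For the upper bound, because all three are nonnegative, the product is monotone in each argument, so $\Psi_k^{(t)} \le 1 \cdot 1 \cdot 2 = 2$. Strictly, since $\mathcal{P}^{\text{need}}_k < 2$, one even gets $\Psi_k^{(t)} < 2$, but the closed interval $[0,2]$ is the claimed (and sufficient) bound. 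I would present this as:
\begin{equation}
0 \;\le\; \mathcal{P}^{\text{learn}}_k \,\mathcal{P}^{\text{conflict}}_k \,\mathcal{P}^{\text{need}}_k \;\le\; 1 \cdot 1 \cdot 2 \;=\; 2.
\end{equation}

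There is essentially no obstacle here: the entire content was already done in Lemma~\ref{lem:bounded_factors}, and the only thing to be careful about is the direction of the inequalities under multiplication of intervals — which is trivial because all factors are nonnegative, so no sign flips occur. (If any factor could be negative, one would need to consider products of endpoints more carefully; here that does not arise.) One could optionally remark that this bound is what makes the subsequent softmax-stability argument clean, since $\Psi_k^{(t)} \in [0,2]$ directly yields the weight-ratio bound $\exp(2/\tau)$ referenced in the "Default Settings" paragraph — i.e., for any $k,\ell$, $w_k^{(t)}/w_\ell^{(t)} = \exp((\Psi_k^{(t)} - \Psi_\ell^{(t)})/\tau) \le \exp(2/\tau)$ — but this remark is a forward pointer rather than part of the Corollary's proof.
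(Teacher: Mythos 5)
Your proposal is correct and matches the paper's (implicit) argument: the paper states the corollary without further proof precisely because it follows, as you show, by multiplying the nonnegative factor bounds from Lemma~\ref{lem:bounded_factors}, giving $0 \le \Psi_k^{(t)} \le 1\cdot 1\cdot 2 = 2$. Your side remarks (the strict bound $\Psi_k^{(t)} < 2$ from $\mathcal{P}^{\text{need}}_k < 2$, and the forward pointer to the $\exp(2/\tau)$ ratio bound in Theorem~\ref{thm:apex_stability}) are consistent with the paper as well.
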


\subsection{Weight Stability Guarantees}

\begin{theorem}[APEX Weight Stability]
\label{thm:apex_stability}
Under the $\mathcal{P}^3$ update rule (Eq.~\ref{eq:p3_priority}) with bounded priorities 
(Lemma~\ref{lem:bounded_factors}) and $\tau > 0$, for any training step 
$t \geq 0$, the weights satisfy:
\begin{itemize}
\item[(i)] \textbf{Simplex preservation}: $\mathbf{w}^{(t)} \in \Delta^{K-1}$ 
      for all $t$.
\item[(ii)] \textbf{Bounded weight ratio}: 
      \begin{equation}
      \frac{w_i^{(t)}}{w_j^{(t)}} \leq \exp(2/\tau).
      \end{equation}
      When $\tau=1$, the ratio is bounded by $\exp(2) \approx 7.39$.
\item[(iii)] \textbf{Smooth updates}: 
      \begin{equation}
      \|\mathbf{w}^{(t+1)} - \mathbf{w}^{(t)}\|_1 \leq 2(1 - \exp(-2/\tau)).
      \end{equation}
\end{itemize}
\end{theorem}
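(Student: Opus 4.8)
The plan is to prove the three claims in sequence, each following almost directly from properties of the softmax map together with the priority bounds of Lemma~\ref{lem:bounded_factors} and its corollary, which give $\Psi_k^{(t)} \in [0,2]$ for every $k$ and $t$.

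\textbf{Part (i), simplex preservation.} This is immediate from the definition of $w_k^{(t)}$ in Eq.~\eqref{eq:p3_priority}: each $w_k^{(t)} = \exp(\Psi_k^{(t)}/\tau) / \sum_\ell \exp(\Psi_\ell^{(t)}/\tau)$ is a ratio of a positive number to a sum of positive numbers, hence strictly positive, and the $w_k^{(t)}$ manifestly sum to $1$. So $\mathbf{w}^{(t)} \in \Delta^{K-1}$; in fact each coordinate is bounded away from $0$, which is what prevents weight collapse.

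\textbf{Part (ii), bounded weight ratio.} I would compute directly:
\begin{equation}
\frac{w_i^{(t)}}{w_j^{(t)}} = \frac{\exp(\Psi_i^{(t)}/\tau)}{\exp(\Psi_j^{(t)}/\tau)} = \exp\!\big((\Psi_i^{(t)} - \Psi_j^{(t)})/\tau\big).
\end{equation}
Since $\Psi_i^{(t)}, \Psi_j^{(t)} \in [0,2]$, the difference satisfies $\Psi_i^{(t)} - \Psi_j^{(t)} \le 2$, so the ratio is at most $\exp(2/\tau)$; setting $\tau = 1$ gives $\exp(2) \approx 7.39$. This is the step where the $[0,2]$ range of the composite priority is used most essentially.

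\textbf{Part (iii), smooth updates.} Here I would bound $\|\mathbf{w}^{(t+1)} - \mathbf{w}^{(t)}\|_1$ using the fact that every coordinate is pinned between a floor and a ceiling dictated by the ratio bound. Concretely, from parts (i) and (ii), for every $k$ we have $w_k^{(t)} \ge \exp(-2/\tau)/K$ at any step (the worst case is when one priority is $2$ and all $K-1$ others are $0$, giving $w_k \ge 1/(1 + (K-1)\exp(2/\tau))$; more cleanly, $w_k^{(t)} \ge \exp(-2/\tau) \cdot w_{\max}^{(t)} \ge \exp(-2/\tau)/K$). Then for each coordinate, $|w_k^{(t+1)} - w_k^{(t)}| \le \max(w_k^{(t+1)}, w_k^{(t)}) - \exp(-2/\tau)/K \le w_k^{(\cdot)}(1 - \exp(-2/\tau))$ after noting both iterates lie in an interval whose lower endpoint is at least $\exp(-2/\tau)$ times the larger iterate; summing over $k$ and using $\sum_k w_k = 1$ on both the $(t+1)$ and $(t)$ vectors yields $\|\mathbf{w}^{(t+1)} - \mathbf{w}^{(t)}\|_1 \le 2(1 - \exp(-2/\tau))$. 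The cleanest route is probably: split coordinates into those that increased and those that decreased, bound the total increase by $1 - (\text{sum of floors of the unchanged-sign set})$ and symmetrically for the decrease, and observe that since every coordinate is at least $\exp(-2/\tau)/K$ and the vector sums to $1$, neither the total positive nor the total negative change can exceed $1 - \exp(-2/\tau)$.

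The main obstacle is Part (iii): unlike (i) and (ii), which are one-line consequences of softmax algebra, the smoothness bound requires a careful accounting of how $\ell_1$ mass can move between coordinates under a softmax whose logits shift arbitrarily within $[0,2/\tau]$. The subtle point is that the bound must hold for \emph{any} pair of successive priority vectors (no continuity of $\Psi^{(t)}$ in $t$ is assumed), so the argument cannot use a Lipschitz/Taylor expansion of softmax in the logits; it must instead be a static extremal bound — the maximum possible $\ell_1$ distance between two probability vectors each of whose coordinates is at least $\exp(-2/\tau)/K$. I would verify the extremal configuration (one vector puts as much mass as allowed on coordinate $i$, the other on coordinate $j$, with all others at the floor) attains exactly $2(1 - \exp(-2/\tau))$, and present the coordinate-splitting argument above as the proof.
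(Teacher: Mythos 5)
Your proposal is correct, and for parts (i) and (ii) it coincides with the paper's proof verbatim: simplex membership is read off the softmax definition, and the ratio bound follows from $w_i^{(t)}/w_j^{(t)} = \exp\bigl((\Psi_i^{(t)}-\Psi_j^{(t)})/\tau\bigr)$ with $\Psi_k \in [0,2]$ from the Corollary to Lemma~\ref{lem:bounded_factors}. For part (iii) you take a genuinely different route. The paper disposes of it in one line, ``by Lipschitz continuity of softmax, with maximum priority change $\delta=2$,'' which is vague: a naive Lipschitz bound in the logits scales like $\delta/\tau$ (exceeding the trivial bound $2$ for small $\tau$), and a ratio-based argument comparing successive steps most naturally yields $2(1-\exp(-4/\tau))$ because both the per-coordinate logit shift and the partition-function ratio must be controlled. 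Your static extremal argument avoids this entirely: since the ratio bound forces every coordinate of every iterate to satisfy $w_k \ge \exp(-2/\tau)\, w_{\max} \ge \exp(-2/\tau)/K$, the coordinate-splitting accounting (total increase $\le 1 - K\cdot\exp(-2/\tau)/K$, likewise for the decrease, and the two are equal since both vectors sum to one) delivers exactly the stated constant $2(1-\exp(-2/\tau))$ without any continuity assumption on $\Psi^{(t)}$ in $t$ — a real advantage, since the theorem indeed makes no such assumption. Two minor caveats: the intermediate per-coordinate inequality $|w_k^{(t+1)}-w_k^{(t)}| \le w_k^{(\cdot)}(1-\exp(-2/\tau))$ you float midway does not hold across time steps for $K>2$ (the floor at step $t$ need not dominate $\exp(-2/\tau)$ times the ceiling at step $t+1$), but you correctly discard it in favor of the aggregate argument; and the claimed extremal configuration does not literally attain $2(1-\exp(-2/\tau))$, since softmax outputs with logits in $[0,2]$ cannot place all non-maximal coordinates at the floor simultaneously — the bound is an over-estimate, which is harmless since only the upper bound is asserted.
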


\begin{proof}
(i) Follows from softmax normalization (Eq.~\ref{eq:p3_priority}).

(ii) From softmax:
\begin{equation}
\frac{w_i^{(t)}}{w_j^{(t)}} = \exp\left(\frac{\Psi_i^{(t)} - \Psi_j^{(t)}}{\tau}\right) 
\leq \exp(2/\tau),
\end{equation}
using $\Psi_k \in [0, 2]$ (Corollary).

(iii) By Lipschitz continuity of softmax, with maximum priority 
change $\delta=2$.
\end{proof}

\subsection{Discussion}

\paragraph{What we prove.}

Theorem~\ref{thm:apex_stability} establishes three properties. 
First, softmax normalization prevents weight collapse—unlike 
multiplicative updates~\citep{lu2025learningoptimizemultiobjectivealignment} 
that require convergent learning rate schedules. 
Second, temperature $\tau$ controls adaptation rate: smaller $\tau$ 
enables aggressive updates, larger $\tau$ smooths changes. 
Third, the weight ratio bound $\exp(2/\tau)$ is independent of training 
history, unlike cumulative bounds in prior work.

\paragraph{DSAN scale invariance.}
Stage~1 z-score normalization removes scale; Stage~2 operates on 
normalized values. This ensures that for any scaling $\{c_k > 0\}$, 
replacing $R_k \to c_k R_k$ leaves $\hat{A}_{\text{final}}^{(i)}$ 
unchanged, preventing variance hijacking (Section~3.2).

\paragraph{What we do not prove.}
We do not establish convergence to Pareto optimality (which requires 
strong convexity and exact gradients), regret bounds (dynamic weights 
create non-stationary MDPs), or long-horizon weight convergence 
(though empirically observed in Appendix~\ref{app:dynamics}). 
Our guarantees are numerical: weights remain bounded and stable. 
Empirical results (Sections~\ref{sec:main}--\ref{sec:analysis}) 
confirm this translates to effective Pareto approximation.

\section{Details of Experiments}
\label{app:experiments}

This appendix provides comprehensive experimental details, additional 
qualitative examples, and in-depth analysis that complement the main 
results in Section~4.

\subsection{Implementation Details}
\label{app:implementation}

\textbf{Training Data.} 
To ensure the OCR objective receives adequate training signal, we use the OCR training 
prompts from Flow-GRPO~\citep{liu2025flowgrpotrainingflowmatching}, which contain 20K 
examples following the template ``A sign that says "[text]"'', where text enclosed in 
double quotes specifies the exact string to be rendered in the generated image. Text 
strings are generated by GPT-4o and span diverse categories including common phrases, 
brand names, warnings, and creative text, with lengths ranging from 2 to 20 
characters. During multi-objective training, all four reward objectives (OCR, PickScore, 
DeQA, Aesthetic) are optimized jointly on these prompts. Evaluation uses 1K held-out 
OCR prompts for text rendering accuracy and the DrawBench~\citep{NEURIPS2022_ec795aea} for image quality metrics.

\paragraph{Hardware and Optimization.}
Training is performed on 8 NVIDIA A100 GPUs for a total of 1,200 steps (main experiments). 
We use the AdamW optimizer with learning rate $3 \times 10^{-4}$ and 
default momentum parameters ($\beta_1=0.9$, $\beta_2=0.999$). The model 
is fine-tuned using LoRA~\citep{hu2021loralowrankadaptationlarge} with 
rank $r=32$ and scaling factor $\alpha=64$, applied to the MM-DiT 
architecture of SD3.5-Medium. Training terminates at 1,200 steps, at 
which point performance approaches single-objective specialist levels 
(Table~\ref{tab:main_results}). Extended training may yield further 
gains but is computationally prohibitive given our resource constraints.

\paragraph{Sampling Strategy.}
During training, we sample 48 unique prompts per epoch, generating $G=24$ 
independent rollouts per prompt under the stochastic policy. To improve 
sample efficiency, we adopt a \textbf{denoising reduction} strategy: 
training rollouts use 10 denoising steps (with SDE noise schedule 
$\sigma_j = 0.7\sqrt{j/(1-j)}$), while final inference uses 40 steps 
to ensure high-quality generation. This reduces data collection cost by 
$\sim$4× without degrading final performance.

\paragraph{Gradient Estimation for $\bm{\mathcal{P}^3}$.}
As referenced in Section 3.3.2, to minimize computational overhead, gradient information 
$\{\nabla_\theta J_k\}_{k=1}^K$ for the $\bm{\mathcal{P}^3}$ mechanism is estimated from 
a single micro-batch of size 8 per epoch (incurring $<10\%$ additional 
time cost). Raw gradient estimates are smoothed using an Exponential 
Moving Average (EMA) with decay rate $\gamma=0.8$:
\begin{equation}
\nabla J_k^{(t)} \leftarrow \gamma \nabla J_k^{(t-1)} + (1-\gamma) \hat{\nabla} J_k^{(t)},
\end{equation}
where $\hat{\nabla} J_k^{(t)}$ is the current mini-batch estimate.

\paragraph{Hyperparameter Settings.}
All APEX coefficients are set to 1.0: temperature parameter $\tau=1$ 
for the softmax in Eq.~\ref{eq:p3_priority}. The KL regularization coefficient in GRPO 
is set to $\beta=0.01$ for all experiments.

\paragraph{Utopia Points.}
The reference upper bounds $U_k$ for Progress Need (Eq.~\ref{eq:p3_need}) 
are determined as follows: for objectives with available single-objective 
specialists (Table~\ref{tab:main_results}), we use their normalized performance; 
for others, we set empirically estimated upper bounds based on observed score 
distributions. Specifically: OCR $= 0.92$ (from OCR-Only specialist), 
PickScore $= 0.90$ (from PickScore-Only specialist), 
DeQA $= 0.86$ (empirical estimate), Aesthetic $= 0.62$ (empirical estimate). 
These bounds guide the Progress Need factor without requiring exhaustive 
single-objective training for all four objectives.

\paragraph{Running Performance Tracking.}
The performance estimate $\bar{R}_k^{(t)}$ is computed as the mean reward 
over the current training batch:
\begin{equation}
\bar{R}_k^{(t)} = \frac{1}{B \cdot G}\sum_{b=1}^B\sum_{i=1}^G R_k(x_0^{(b,i)}, c_b),
\end{equation}
where $B$ is the number of prompts per batch and $G=24$ is the group size. 
This batch-level average provides an instantaneous estimate of the policy's 
current capability on objective $k$, used by the Progress Need factor 
(Eq.~\ref{eq:p3_need}) to identify bottleneck objectives.

\subsection{Hypervolume Computation Details}
\label{app:hypervolume}

We employ the Hypervolume (HV) indicator \citep{zitzler1999multiobjective} 
to quantify Pareto front approximation quality across three experimental contexts. 

\paragraph{Pareto Domination.}
For two solution vectors $a, b \in \mathbb{R}^K$ in a maximization setting, 
$a$ \textbf{dominates} $b$ (denoted $a \succ b$) if and only if 
$a_j \geq b_j$ for all $j \in \{1, \ldots, K\}$ and 
$a_j > b_j$ for at least one $j$. 
A solution $a$ is \textbf{non-dominated} (Pareto-optimal) in a set 
$\mathcal{A}$ if no other solution in $\mathcal{A}$ dominates it.

\paragraph{Hypervolume Definition.}
For a finite set $\mathcal{A} = \{a^{(1)}, \ldots, a^{(n)}\} \subset \mathbb{R}^K$ 
and reference point $\mathbf{r} = (r_1, \ldots, r_K)$, hypervolume is defined as:
\begin{equation}
\label{eq:hypervolume_def}
\text{HV}(\mathcal{A}; \mathbf{r}) = \Lambda\left(\bigcup_{a \in \mathcal{A}} 
[a, \mathbf{r}]\right),
\end{equation}
where $[a, \mathbf{r}] = \{x \in \mathbb{R}^K \mid r_j \leq x_j \leq a_j, 
\forall j \in \{1, \ldots, K\}\}$ and $\Lambda$ denotes the $K$-dimensional 
Lebesgue measure. Intuitively, HV measures the volume of objective space 
\textbf{dominated by} $\mathcal{A}$ (i.e., the union of all hyperrectangles 
$[a, \mathbf{r}]$ for $a \in \mathcal{A}$) and bounded below by $\mathbf{r}$; 
larger values indicate better Pareto front coverage. We compute HV using the 
dimension-sweep algorithm \citep{fonseca2006improved}, which achieves 
$O(n^{K-2} \log n)$ complexity for $n$ points in $K$ dimensions.

\paragraph{Metric Normalization.}
To ensure commensurability across heterogeneous objectives, we apply simple 
range normalization before HV computation. OCR accuracy is already in $[0,1]$ 
and requires no scaling. For other metrics, we divide by empirical upper bounds: 
PickScore$/26$ (typical maximum), DeQA$/5$ (defined range), and Aesthetic$/10$ 
(conservative upper bound to accommodate occasional high-scoring outliers). 
This normalization does not affect training dynamics, as DSAN (Section~\ref{sec:dsan}) 
performs additional standardization within GRPO updates.

\paragraph{Reference Points.}
We employ three reference configurations depending on the evaluation context. 
\textbf{All references follow the canonical (OCR, PickScore, DeQA, Aesthetic) 
ordering used in Table~\ref{tab:main_results}.}

\begin{itemize}[leftmargin=*,noitemsep,topsep=3pt]
    \item \textbf{Main Results (Table~\ref{tab:main_results}):} 
    Reference point is set to base model (SD3.5-M) performance. 
    Original metric values: $\mathbf{r}_{\text{base}}^{\text{raw}} = 
    (0.59, 21.72, 4.07, 5.39)$ for (OCR, PickScore, DeQA, Aesthetic) 
    respectively. After normalization (OCR unchanged, PickScore$/26$, 
    DeQA$/5$, Aesthetic$/10$), the reference becomes 
    $\mathbf{r}_{\text{base}} = (0.59, 0.835, 0.814, 0.539)$. 
    Since only the final checkpoint is evaluated per model, HV serves 
    as a scalar proxy for overall improvement magnitude rather than 
    Pareto set diversity.
    
    \item \textbf{Training Dynamics Analysis (Section~\ref{sec:analysis}, 
    Figure~\ref{fig:analysis}(d)):} Reference point reflects early-training 
    performance (normalized coordinates): $\mathbf{r}_{\text{early}} = (0.38, 0.81, 0.60, 0.50)$ . 
    We evaluate 10 evenly-spaced checkpoints 
    (steps $\{120, 240, \ldots, 1200\}$) on a held-out test set, 
    computing HV over the non-dominated subset to track Pareto front evolution.
    
    \item \textbf{Ablation Studies (Table~\ref{tab:p3_ablation}):} 
    Same reference as training dynamics: 
    $\mathbf{r}_{\text{early}} = (0.38, 0.81, 0.60, 0.50)$. 
    HV is computed over training batch rewards using non-overlapping 
    50-step windows, averaging across multiple epochs to reduce stochastic 
    variance from rollout sampling.
\end{itemize}

\paragraph{Rationale for Context-Specific References.}
The choice of reference point affects hypervolume interpretation 
\citep{fonseca1996performance}. For Table~\ref{tab:main_results}, using 
base model performance as reference enables intuitive comparison of 
post-training gains. For temporal analyses (Figure~\ref{fig:analysis}(d), 
Table~\ref{tab:p3_ablation}), the early-training reference avoids artificial 
inflation from large initial improvements, better highlighting incremental 
Pareto front expansion during fine-tuning. We use test-set evaluation in 
Figure~\ref{fig:analysis}(d) to rigorously validate Pareto front evolution, 
while training-batch evaluation in Table~\ref{tab:p3_ablation} enables 
efficient ablation across multiple $\mathcal{P}^3$ configurations.

\begin{figure*}[t]
  \centering
  \includegraphics[width=\textwidth]{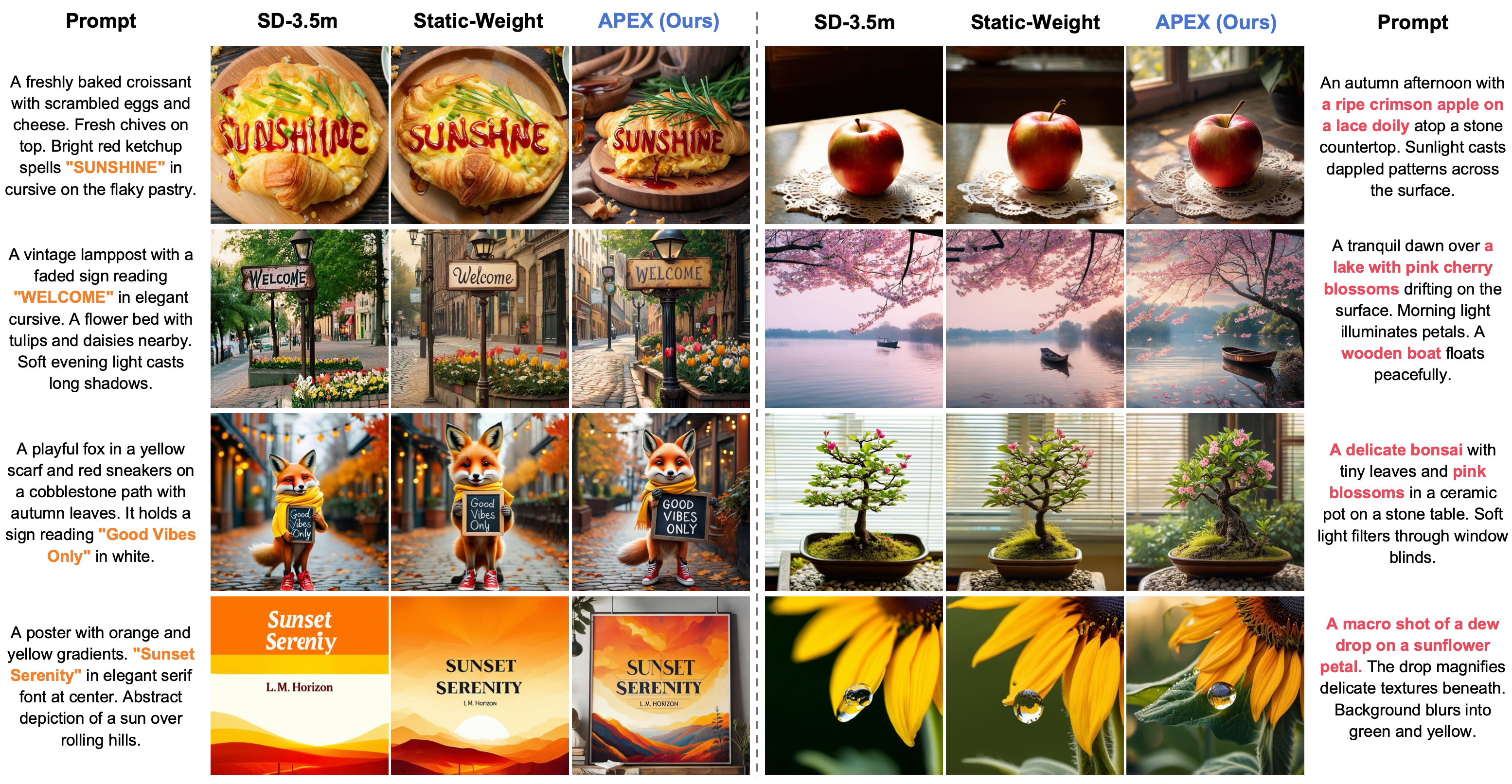}
  \caption{\textbf{Qualitative comparison across text-centric (left) and aesthetics-centric (right) scenarios.}
  In text-intensive generation, APEX achieves superior spelling accuracy and semantic coherence while maintaining visual realism. In photorealistic generation, APEX demonstrates enhanced detail preservation across lighting, textures, and atmospheric depth, exploring more favorable Pareto frontiers. Both baselines exhibit various failure modes including text errors, unrealistic rendering, and limited aesthetic improvement.}
  \label{fig:comparison}
  \vspace{-3mm} 
\end{figure*}

\subsection{Qualitative Comparison}
\label{app:qualitative}

Figure~\ref{fig:comparison} presents a visual comparison between APEX and the baseline methods across diverse scenarios. We evaluate the results from two perspectives: instruction alignment accuracy and visual detail fidelity. 

\textbf{Instruction Alignment Accuracy.} Although the Static-Weight method tends to prioritize local text features to achieve higher OCR scores in Table \ref{tab:main_results}, APEX demonstrates superior comprehensive capability in coordinating text, semantics, and visuals. In tasks involving complex text rendering, APEX effectively addresses typical failure modes of baseline methods. Specifically, SD-3.5m and Static-Weight exhibit spelling errors or character deformations in Row 1 and Row 3, whereas APEX accurately renders the complete text while maintaining natural integration with the object's texture. In Row 2, although Static-Weight generates clear fonts, the overly pristine texture of the sign contradicts the ``vintage'' stylistic constraint; in contrast, APEX precisely restores the weathered texture and ambient lighting while maintaining legibility. For the poster prompt in Row 4, which contains compositional ambiguity, APEX tends to construct a 3D scene with higher spatial complexity rather than a simple 2D layout, demonstrating its ability to explore diverse optimal solutions on the Pareto front.

\textbf{Visual Detail Fidelity.} In photorealistic rendering, APEX consistently outperforms the baselines in lighting modeling, color interaction, and physical logic, whereas Static-Weight shows limited improvement over SD-3.5m in these dimensions. A representative case is the cherry blossom lake (Row 6), where APEX achieves precise color isolation, preventing the background vegetation from being affected by ``color-bleeding'' from the pink blossoms, while capturing realistic reflections and atmospheric depth. Similar detail enhancements are evident in the delicate lace lighting (Row 5), the realistic bark textures and blind-filtered light (Row 7), and the physically-grounded dewdrop refraction and natural droplet distribution (Row 8). These observations suggest that APEX's adaptive priority mechanism more effectively mines fine-grained, aesthetics-related reward signals.

The qualitative comparison corroborates the quantitative findings in Table \ref{tab:main_results}: APEX effectively mitigates the ``variance hijacking'' phenomenon in multi-objective alignment. While maintaining competitiveness in key text metrics, it avoids semantic misalignment caused by overfitting a single objective, achieving a systemic balance between perceptual quality and instruction following.

\subsection{Composite Advantage Variance Dynamics}
\label{app:dynamics}

To verify the necessity of the second stage of DSAN normalization, 
we track the variance of the composite advantages after weighted 
aggregation but before the second normalization (Fig.~\ref{fig:variance_decay}). 
Experimental observations show a significant and continuous decay 
in advantage variance, dropping from $\sim 0.40$ initially to 
$\sim 0.28$ in the later stages. According to the principle of 
variance decomposition, while weight magnitudes remain relatively 
stable, the drop in total variance is attributed to the objective-wise 
covariance becoming negative. This statistically confirms that as 
training progresses, the model enters a ``zero-sum game'' region 
near the Pareto front, where gradient interference between objectives 
leads to signal cancellation. Without second-stage normalization, 
this natural variance decay would be equivalent to a passive reduction 
in the learning rate. APEX's second-stage normalization provides an 
adaptive signal rescaling mechanism that forces the decayed composite 
advantages back to a standard distribution, compensating for signal 
loss caused by objective conflicts.

\subsection{Stability Analysis of \texorpdfstring{$\bm{\mathcal{P}^3}$}{P3} Factors}
\label{app:ablation}

\begin{figure}[t]
  \centering
  \includegraphics[width=\columnwidth]{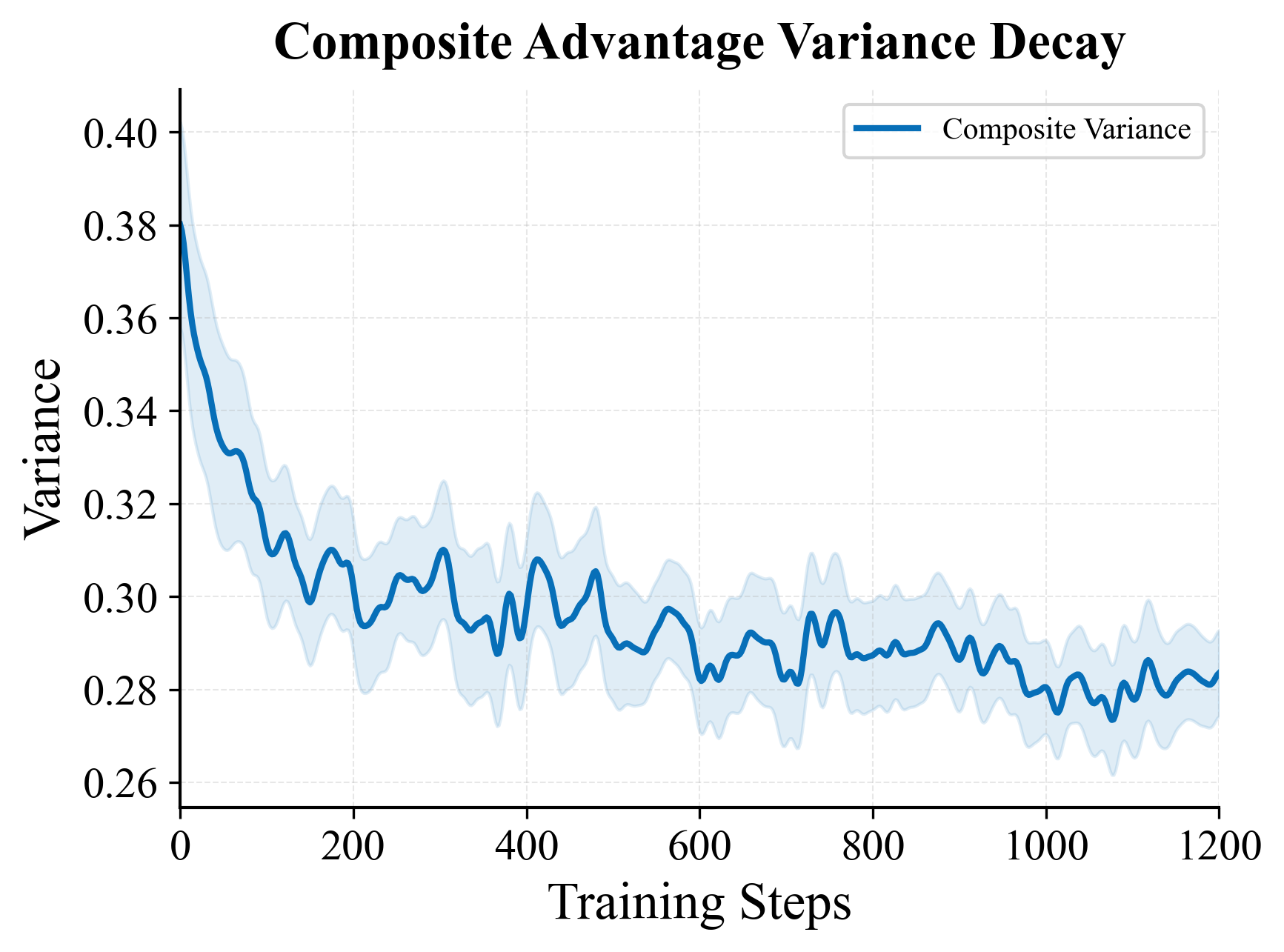}
  \caption{\textbf{Composite Advantage Variance Decay.} The weighted advantage variance decreases over training as DSAN normalizes conflicting gradients, reducing inter-objective interference.}
  \label{fig:variance_decay}
\end{figure}

\begin{figure}[!t]
  \centering
  \includegraphics[width=\columnwidth]{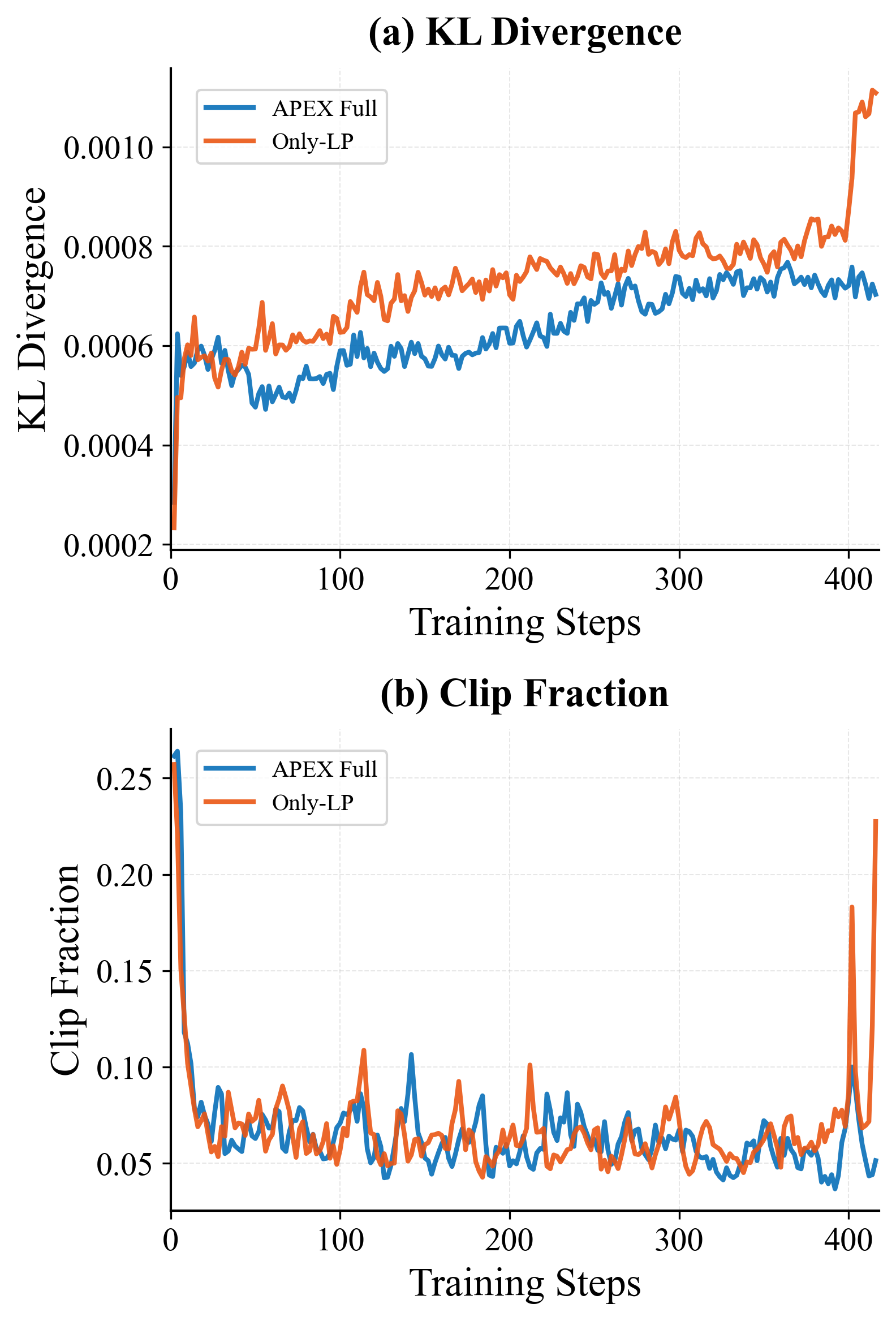}
  \caption{\textbf{Stability analysis of $\mathcal{P}^3$ factors.} The Only-LP variant ($\alpha{=}0, \beta{=}0$) shows sudden spikes in KL divergence and clipping fraction, indicating unstable policy updates. APEX Full remains stable, confirming that CP and PN factors act as safety valves against gradient conflicts and objective saturation.}
  \label{fig:stability}
\end{figure}

We reduce APEX to the \textbf{Only-LP} variant ($\alpha{=}0, \beta{=}0$), 
retaining only the learning potential factor. As shown in 
Fig.~\ref{fig:stability}, the Only-LP variant performs similarly to 
APEX Full in the early stages, but as training deepens, its KL divergence 
and clipping fraction exhibit sudden, sharp spikes. This indicates that 
solely pursuing high gradient magnitude leads to overly aggressive policy 
updates. Without the constraints of Conflict Penalty (CP), the model 
forces updates when gradient directions are inconsistent, triggering 
parameter oscillations. Without Progress Need (PN) regulation, the model 
continues to apply high weights after certain objectives are saturated, 
causing the policy to deviate severely from the reference model. In 
contrast, APEX Full maintains a stable training trajectory throughout. 
This proves that the CP and PN factors serve as critical \textbf{Safety 
Valves} within the framework, ensuring convergence stability by dynamically 
suppressing excessive weight allocation during conflicts or objective 
saturation.

\end{document}